\documentclass[11pt,a4paper]{article}
\usepackage[marginratio=1:1,height=600pt,width=460pt,tmargin=100pt]{geometry}
\usepackage[utf8]{inputenc}
\usepackage[parfill]{parskip}    
\usepackage{url}            
\usepackage{booktabs}       
\usepackage{amsfonts}       
\usepackage{nicefrac}       
\usepackage{microtype}      
\usepackage{graphicx}
\usepackage{amsmath}
\usepackage{amssymb}
\graphicspath{{./figs/}}
\usepackage{subcaption}
\usepackage{amsthm}
\usepackage{amscd}
\usepackage{mathrsfs}
\usepackage{bm}
\usepackage{algorithmic, algorithm}
\usepackage{multirow}
\usepackage{mathtools}
\usepackage{hyperref}
\usepackage{url}
\usepackage{enumitem}
\usepackage{authblk}
\usepackage{wrapfig,lipsum,booktabs}

\allowdisplaybreaks

\newcommand{\R}{\mathbb{R}}
\newcommand{\rst}{\mathcal{RST}}
\newcommand{\cs}{\mathcal{S}}
\newcommand{\cst}{\mathcal{ST}}
\newtheorem{thm}{Theorem}
\newtheorem*{thm*}{Theorem}
\newtheorem{lemma}{Lemma}
\newtheorem*{lemma*}{Lemma}
\newtheorem{prop}{Proposition}
\newtheorem{rmk}{Remark}

\begin{document}

\title{Deformation Robust Roto-Scale-Translation Equivariant CNNs}
\author[1]{Liyao Gao}
\author[2]{Guang Lin}
\author[3]{Wei Zhu\footnote{Email: zhu@math.umass.edu.}}

\affil[1]{Department of Statistics, University of Washington}
\affil[2]{Department of Mathematics and School of Mechanical Engineering, Purdue University}
\affil[3]{Department of Mathematics and Statistics, University of Massachusetts Amherst}
\date{}

\maketitle

\begin{abstract}
    Incorporating group symmetry directly into the learning process has proved to be an effective guideline for model design. By producing features that are guaranteed to transform covariantly to the group actions on the inputs, group-equivariant convolutional neural networks (G-CNNs) achieve significantly improved generalization performance in learning tasks with intrinsic symmetry. General theory and practical implementation of G-CNNs have been studied for planar images under either rotation or scaling transformation, but only individually. We present, in this paper, a roto-scale-translation equivariant CNN ($\rst$-CNN), that is guaranteed to achieve equivariance jointly over these three groups via coupled group convolutions. Moreover, as symmetry transformations in reality are rarely perfect and typically subject to input deformation, we provide a stability analysis of the equivariance of representation to input distortion, which motivates the truncated expansion of the convolutional filters under (pre-fixed) low-frequency spatial modes. The resulting model provably achieves deformation-robust $\rst$ equivariance, i.e., the $\rst$ symmetry is still ``approximately”  preserved  when  the  transformation  is  ``contaminated” by a nuisance data deformation, a property that is especially  important  for  out-of-distribution generalization. Numerical experiments on MNIST, Fashion-MNIST, and STL-10 demonstrate that the proposed model yields remarkable gains over prior arts, especially in the small data regime where both rotation and scaling variations are present within the data.
\end{abstract}

\section{Introduction}

Symmetry is ubiquitous in machine learning. For instance, in image classification, the class label of an image remains the same when the image is spatially translated. Convolutional neural networks (CNNs) through spatial weight sharing achieve built-in \textit{translation-equivariance}, i.e., a shift of the input leads to a corresponding shift of the output, a property that improves the generalization performance and sample complexity of the model for computer vision tasks with translation symmetry, such as image classification~\cite{Krizhevsky_2012_imagenet}, object detection~\cite{Ren_2015_detection}, and segmentation~\cite{Long_2015_segmentation, Ronneberger_2015_segmentation}.

Inspired by the standard CNNs, researchers in recent years have developed both theoretical foundations and practical implementations of \textit{group equivariant CNNs} (G-CNNs), i.e., generalized CNN models that guarantee a desired transformation on layer-wise features under a given input group transformation, for signals defined on Euclidean spaces \cite{Cohen_2016_group,steerable_cnn, Worrall_2017_harmonic, Weiler_2018_learning, weiler20183d, worrall2019deep, sosnovik2019scale,Cheng_2018_rotdcf,zhu2019scale,general_e2,hexaconv, Zhou_2017_orn, worrall2018cubenet, Kim_2014_scale, Xu_2014_scale, Marcos_2018_scale}, manifolds \cite{spherical,cohen2018general,kondor2018clebsch, Defferrard2020DeepSphere:}, point clouds \cite{thomas2018tensor, chen2021equivariant, 10.1007/978-3-030-58452-8_1}, and graphs \cite{kondor2018n,NEURIPS2019_03573b32,NEURIPS2019_ea9268cb}. In particular, equivariant CNNs for \textit{either} rotation \cite{general_e2, Cheng_2018_rotdcf, hexaconv,Worrall_2017_harmonic, Zhou_2017_orn,Marcos_2017_rotation, Weiler_2018_learning} \textit{or} scaling \cite{Kim_2014_scale,Marcos_2018_scale,Xu_2014_scale,worrall2019deep,sosnovik2019scale,zhu2019scale} transforms on 2D inputs have been well studied \textit{separately}, and their advantage has been empirically verified in settings where the data have rich variance in either rotation or scale \textit{individually}.

However, for many vision tasks, it is beneficial for a model to \textit{simultaneously} incorporate translation, rotation, and scaling symmetry directly into its representation. For example, a self-driving vehicle is required to recognize and locate pedestrians, objects, and road signs under random translation (e.g., moving pedestrians), rotation (e.g., tilted road signs), and scaling (e.g., close and distant objects)~\cite{bojarski2016end}. Moreover, in realistic settings, symmetry transformations are rarely perfect; for instance, a tilted stop sign located faraway can be modeled in reality as if it were transformed through a sequence of translation, rotation and scaling following a local data \textit{deformation}, which results from (unavoidable) changing view angle and/or digitization. It is thus crucial to design roto-scale-translation equivariant CNNs ($\rst$-CNNs) with provably robust equivariant representation such that the $\rst$ symmetry is still ``approximately" preserved when the transformation is “contaminated” by a nuisance data deformation. Such deformation robustness is especially important for out-of-distribution generalization when the equivariant model is trained on, for instance, only upright images within a particular scale range but tested on randomly rotated images at a different scale.

The purpose of this paper is to address both the theoretical and practical aspects of constructing deformation robust $\rst$-CNNs,  which, to the best of our knowledge, have not been studied in the computer vision community. Specifically, our contribution is three-fold:
\begin{enumerate}
\item We propose roto-scale-translation equivariant CNNs with joint convolutions over the space $\R^2$, the rotation group $SO(2)$, and the scaling group $\cs$, which is shown to be sufficient and necessary for equivariance with respect to the regular representation of the group $\rst$.

\item We provide a stability analysis of the proposed model, guaranteeing its ability to achieve equivariant representations that are robust to nuisance data deformation.
\item Numerical experiments are conducted to demonstrate the superior (both in-distribution and out-of-distribution) generalization performance of our proposed model for vision tasks with intrinsic $\rst$ symmetry, especially in the small data regime.
\end{enumerate}


\section{Related Works}

\noindent\textbf{Group-equivariant CNNs (G-CNNs).} Since its introduction by Cohen and Welling \cite{Cohen_2016_group}, a variety of works on G-CNNs have been conducted that consistently demonstrate the benefits of bringing equivariance prior into network designs. Based on the idea proposed in \cite{Cohen_2016_group} for discrete symmetry groups, G-CNNs with group convolutions which achieve equivariance under regular representations of the group have been studied for the 2D (and 3D) roto-translation group $SE(2)$ (and $SE(3)$) \cite{general_e2, Cheng_2018_rotdcf, hexaconv,Worrall_2017_harmonic, Zhou_2017_orn,Marcos_2017_rotation, Weiler_2018_learning, worrall2018cubenet}, scaling-translation group \cite{Kim_2014_scale,Marcos_2018_scale,Xu_2014_scale,worrall2019deep,sosnovik2019scale,zhu2019scale}, rotation $SO(3)$ on the sphere \cite{spherical,kondor2018clebsch, Defferrard2020DeepSphere:}, and permutation on graphs \cite{kondor2018n,NEURIPS2019_03573b32,NEURIPS2019_ea9268cb}. B-spline CNNs \cite{Bekkers2020B-Spline} and LieConv \cite{finzi2020generalizing} generalize group convolutions to arbitrary Lie groups on generic spatial data, albeit achieving slightly inferior performance compared to G-CNNs specialized for Euclidean inputs \cite{finzi2020generalizing}. Steerable CNNs further generalize the network design to realize equivariance under induced representations of the symmetry group \cite{steerable_cnn,weiler20183d, general_e2, cohen2018general}, and the general theory has been summarized in \cite{cohen2018general} for homogeneous spaces.

\noindent\textbf{Representation robustness to input deformations.} Input deformations typically introduce noticeable yet uninformative variability within the data. Models that are robust to data deformation are thus favorable for many vision applications. The scattering transform network \cite{Bruna_2013_invariant, Mallat_2010_recursive, Mallat_2012_group}, a multilayer feature encoder defined by average pooling of wavelet modulus coefficients, has been proved to be stable to both input noise and nuisance deformation. Using group convolutions, scattering transform has also been extended in \cite{Oyallon_2015_deep,Sifre_2013_rotation} to produce rotation/translation-invariant features. Despite being a pioneering mathematical model, the scattering network uses pre-fixed wavelet transforms in the model, and is thus non-adaptive to the data. Stability and invariance have also been studied in \cite{bietti2017invariance, bietti2019group} for convolutional kernel networks \cite{NIPS2016_fc8001f8,NIPS2014_81ca0262}. DCFNet \cite{dcfnet} combines the regularity of a pre-fixed filter basis and the trainability of the expansion coefficients, achieving both representation stability and data adaptivity. The idea is later adopted in \cite{Cheng_2018_rotdcf} and \cite{zhu2019scale} in building deformation robust models that are equivariant to either rotation or scaling individually.

Despite the growing body of literature in G-CNNs, to the best of our knowledge, no G-CNNs have been specifically designed to \textit{simultaneously} achieve roto-scale-translation equivariance. More importantly, no stability analysis has been conducted to quantify and promote robustness of such equivariant model to nuisance input deformation.

\section{Roto-Scale-Translation Equivariant CNNs}
\label{sec:rstdcf}
We first explain, in Section~\ref{sec:preliminary} , the definition of groups, group representations, and group-equivariance, which serves as a background for constructing $\rst$-CNNs in Section~\ref{sec:rst_thm}. 

\subsection{Preliminaries}
\label{sec:preliminary}
\noindent\textbf{Group.} A group is a set $G$ equipped with a binary operator $\cdot:G\times G\to G$, satisfying associativity and the  existence of an identity $e$ as well as an inverse element $g^{-1}$ for all $g\in G$. In this paper, we consider in particular the roto-scale-translation group $\rst  =  (SO(2)\times \R) \ltimes \R^2 = \{(\eta, \beta, v): \eta\in [0, 2\pi], \beta \in  \R, v\in\R^2\}$, with the group multiplication
\begin{align}
    (\eta, \beta,  v) \cdot (\theta, \alpha, u) = (\theta + \eta, \alpha+\beta, v+R_\eta 2^\beta u),
\end{align}
where $R_\eta u$ is a counterclockwise rotation (around the origin) by angle $\eta$ applied to a point $u\in \R^2$.

\noindent \textbf{Group action and representation.} Given a group $G$ and a set $\mathcal{X}$, $D_g:\mathcal{X}\to \mathcal{X}$ is called a $G$-action on $\mathcal{X}$ if $D_g$ is invertible for all $g\in G$, and $D_{g_1}\circ D_{g_2} = D_{g_1\cdot g_2}, ~\forall g_1, g_2\in G$, where $\circ$ denotes map composition. A $G$-action $D_g$ is called a $G$-\textit{representation} if $\mathcal{X}$ is further assumed to be a vector space and $D_g$ is linear for all $g\in G$. In particular, given an input RGB image $x^{(0)}(u, \lambda)$ modeled in the continuous setting (i.e., $x^{(0)}$ is the intensity of the RGB color channel $\lambda\in \{1,2, 3\}$ at the pixel location $u\in \R^2$), a roto-scale-translation transformation on the image $x^{(0)}(u,\lambda)$ can be understood as an $\rst$-action (representation) $D_g^{(0)} = D_{\eta, \beta, v}^{(0)}$ acting on the input $x^{(0)}$:
\begin{align}
\label{eq:group-action1}
    [D_{\eta, \beta, v}^{(0)} x^{(0)}](u,\lambda) = x^{(0)}\left(R_{-\eta}2^{-\beta}(u-v), \lambda\right), 
\end{align}
i.e., the transformed image $[D_{\eta, \beta, v} x^{(0)}]$ is obtained through an $\eta$ rotation, $2^{\beta}$ scaling, and $v$ translation. 

\noindent\textbf{Group Equivariance.} Let $f:\mathcal{X}\to\mathcal{Y}$ be a map between $\mathcal{X}$ and $\mathcal{Y}$, and $D_g^{\mathcal{X}}, D_g^{\mathcal{Y}}$ be $G$-actions on $\mathcal{X}$ and  $\mathcal{Y}$ respectively. The map $f$ is said to be $G$-equivariant if
\begin{align}
  \label{eq:def_G_equivariance}
  f(D_g^{\mathcal{X}} x) = D_g^{\mathcal{Y}}(f(x)), \quad \forall~ g\in G, ~x\in \mathcal{X}.
\end{align}
A special case of \eqref{eq:def_G_equivariance} is $G$-invariance, when $D_g^{\mathcal{Y}}$ is set to $\text{Id}_{\mathcal{Y}}$, the identity map on $\mathcal{Y}$. For vision tasks where the output $y\in\mathcal{Y}$ is known a priori to transform covariantly through $D_g^{\mathcal{Y}}$ to a $D_g^{\mathcal{X}}$ transformed input, e.g., the class label $y$ remains identical for a rotated/rescaled/shifted input $x\in \mathcal{X}$, it is beneficial to consider only $G$-equivariant models $f$ to reduce the statistical error of the learning method for improved generalization.

\subsection{Equivariant Architecture}
\label{sec:rst_thm}


Since the composition of equivariant maps remains equivariant,  to construct an $L$-layer $\rst$-CNN, we only need to specify the $\rst$-action $D_{\eta, \beta, v}^{(l)}$ on each feature space $\mathcal{X}^{(l)}$, $0\le l\le L$, and require the layer-wise mapping to be equivariant:
\begin{align}
    x^{(l)}[D^{(l-1)}_{\eta, \beta, v}x^{(l-1)}] = D^{(l)}_{\eta, \beta, v}x^{(l)}[x^{(l-1)}], ~\forall l\ge 1,
\end{align}
where we slightly abuse the notation $x^{(l)}[x^{(l-1)}]$ to denote the $l$-th layer output given the $(l-1)$-th layer feature $x^{(l-1)}$. In particular, we define $D_{\eta, \beta,v}^{(0)}$ on the input as in \eqref{eq:group-action1}; for the hidden layers $1\le l\le L$, we let $\mathcal{X}^{(l)}$ be the feature space consisting of features in the form of $x^{(l)}(u, \theta, \alpha, \lambda)$, where $u\in\R^2$ is the spatial position, $\theta\in [0,2\pi]$ is the rotation index, $\alpha\in\R$ is the scale index, $\lambda\in [M_l] \coloneqq \left\{1, \ldots, M_l\right\}$ corresponds to the unstructured channels (similar to the RGB channels of the input), and we define the action $D_{\eta, \beta, v}^{(l)}$ on $\mathcal{X}^{(l)}$ as
\begin{align}
   \label{eq:group-action2}
  & [D_{\eta, \beta, v}^{(l)}x^{(l)}](u,\theta, \alpha, \lambda)   = x^{(l)}\left(R_{-\eta}2^{-\beta}(u-v), \theta-\eta, \alpha - \beta, \lambda\right), ~\forall l\ge 1.
\end{align}
We note that \eqref{eq:group-action2} corresponds to the \textit{regular} representation of $\rst$ on $\mathcal{X}^{(l)}$ \cite{cohen2018general}, which is adopted in this work as its ability to encode any function on the group $\rst$ leads typically to better model expressiveness and stronger generalization performance \cite{general_e2}. The following proposition outlines the general network architecture to achieve $\rst$-equivariance under the representations $D_{\eta, \beta, v}^{(l)}$ \eqref{eq:group-action1} \eqref{eq:group-action2}.

\begin{prop}
  \label{thm:equivariance}
An $L$-layer feedforward neural network is $\rst$-equivariant under the representations \eqref{eq:group-action1} \eqref{eq:group-action2} if and only if the layer-wise operations are defined as \eqref{eq:1st-layer} and \eqref{eq:lth-layer}:
\begin{flalign}
  \label{eq:1st-layer}
  & x^{(1)}[x^{(0)}](u,\theta, \alpha, \lambda) = \sigma\bigg[ \sum_{\lambda'}\int_{\R^2} x^{(0)}(u+u', \lambda')  2^{-2\alpha} W_{\lambda', \lambda}^{(1)}\left(2^{-\alpha}R_{-\theta} u'\right)du' + b^{(1)}(\lambda)\bigg],\\
  & x^{(l)}[x^{(l-1)}](u,\theta, \alpha, \lambda)= \sigma\bigg[ \sum_{\lambda'}\int_{\R^2}\int_{S^1}\int_{\R}  x^{(l-1)}(u+u', \theta+\theta', \alpha+\alpha', \lambda')   \nonumber\\ 
  \label{eq:lth-layer}
  &   \quad\quad\quad\quad \cdot 2^{-2\alpha} W_{\lambda', \lambda}^{(l)}\left(2^{-\alpha}R_{-\theta} u', \theta',\alpha'\right)d\alpha' d\theta' du' + b^{(l)}(\lambda)\bigg]
\end{flalign}
where $\sigma:\R\to\R$ is a pointwise nonlinearity, $W_{\lambda',\lambda}^{(1)}(u)$ is the spatial convolutional filter in the first layer with output channel $\lambda$ and input channel $\lambda'$, $W_{\lambda', \lambda}^{(l)}(u, \theta, \alpha)$ is the $\rst$ joint convolutional filter for layer $l>1$, and $\int_{S^1}f(\alpha)d\alpha$ denotes the normalized $S^1$ integral $\frac{1}{2\pi}\int_{0}^{2\pi}f(\alpha)d\alpha$.
\end{prop}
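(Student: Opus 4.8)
The plan is to prove both directions by reducing network-level equivariance to a single-layer statement and then characterizing equivariant layers. Since the composition of equivariant maps is equivariant (as noted before the proposition), for the ``if'' direction it suffices to verify that each layer map of the form \eqref{eq:1st-layer}, \eqref{eq:lth-layer} intertwines $D^{(l-1)}_{\eta,\beta,v}$ and $D^{(l)}_{\eta,\beta,v}$; network equivariance then follows by composing over $l$. For the ``only if'' direction I would invoke the stated design requirement that each layer be equivariant and classify all such layer maps. The core is thus a single-layer lemma: a map $x^{(l)} = \sigma(T x^{(l-1)} + b)$, with $T$ linear and $\sigma$ a pointwise nonlinearity, is equivariant if and only if $T$ is the stated (group-)convolution and $b = b^{(l)}(\lambda)$.

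Sufficiency is a direct change of variables. For \eqref{eq:lth-layer} I would substitute the transformed input $D^{(l-1)}_{\eta,\beta,v}x^{(l-1)}$ into the integral and apply the substitution $u' \mapsto R_\eta 2^\beta u'$ while leaving $\theta',\alpha'$ fixed; the rotation and scaling this introduces are absorbed exactly by the factor $2^{-2\alpha}$ and by the argument $2^{-\alpha}R_{-\theta}u'$ of $W^{(l)}_{\lambda',\lambda}$, so that after the substitution the expression equals $D^{(l)}_{\eta,\beta,v}$ applied to the untransformed output. The first-layer identity \eqref{eq:1st-layer} is the same computation with the rotation and scale indices absent on the input side. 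Because $D^{(l)}_{\eta,\beta,v}$ merely relabels the coordinates $(u,\theta,\alpha)$ and never mixes values across channels, it commutes with the pointwise $\sigma$, so the nonlinearity is harmless and the bias $b^{(l)}(\lambda)$ is unaffected.

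Necessity is where the real work lies. Since $D^{(l)}_{\eta,\beta,v}$ commutes with $\sigma$, equivariance of $\sigma(Tx+b)$ forces equivariance of the pre-activation $Tx+b$; evaluating at $x=0$ and using linearity of $T$ gives $b = D^{(l)}_{\eta,\beta,v}b$, so $b$ is constant in $(u,\theta,\alpha)$ (the action being transitive in those coordinates) and equals some $b^{(l)}(\lambda)$, while the linear part must satisfy $TD^{(l-1)}_{g}=D^{(l)}_{g}T$ for all $g\in\rst$. The decisive observation is that the hidden feature space $\mathcal{X}^{(l)}$, ignoring the unstructured channel $\lambda$ over which everything decouples into a matrix of scalar operators, is exactly the space of functions on the group $\rst$, with $D^{(l)}_{\eta,\beta,v}$ equal to its left-regular representation; I would verify this by checking $D^{(l)}_g x = x\circ L_{g^{-1}}$ from the group law. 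Representing $T$ by an integral kernel $K$ and imposing the intertwining relation then forces $K(h,h')$ to depend only on the relative group element $h^{-1}h'$, i.e.\ $T$ is a group convolution, and rewriting this in the coordinates $(u,\theta,\alpha)$ and substituting $u' = R_\theta 2^\alpha u''$ yields precisely \eqref{eq:lth-layer}, with $W^{(l)}_{\lambda',\lambda}$ read off from the kernel.

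The main obstacle, and the step I would treat most carefully, is the measure-theoretic bookkeeping: the coordinate (Lebesgue) measure $d\theta\,d\alpha\,du$ is not left-invariant on $\rst$, since left translation by $(\eta,\beta,v)$ rescales it by $2^{2\beta}$, so the left Haar measure carries the density $2^{-2\alpha}$, and it is exactly this density, transported through the substitution $u'=R_\theta 2^\alpha u''$, that becomes the prefactor $2^{-2\alpha}$ and the rescaled filter argument $2^{-\alpha}R_{-\theta}u'$ in \eqref{eq:lth-layer}. The first layer needs a parallel but separate argument: its input lives on $\R^2 \cong \rst/(SO(2)\times\R)$ rather than on the full group, so the kernel constraint reduces $K$ to a single spatial filter $W^{(1)}_{\lambda',\lambda} = K(e,\cdot)$, giving \eqref{eq:1st-layer}. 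A minor technical point I would state explicitly is that the kernel representation of $T$ presumes a suitable operator class (e.g.\ bounded on $L^2$, or acting on a dense subspace), and that stripping $\sigma$ uses its non-degeneracy.
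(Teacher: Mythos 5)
Your proposal is correct, and its sufficiency half is essentially the paper's argument: substitute the transformed input and change variables $u'\mapsto R_\eta 2^\beta u'$, with the prefactor $2^{-2\alpha}$ absorbing the Jacobian $2^{2\beta}$. For necessity the two routes compute the same thing but are organized differently, and the difference is worth recording. The paper works entirely in coordinates: it posits a general integral kernel $W^{(l)}(u',\theta',\alpha',\lambda',u,\theta,\alpha,\lambda)$, derives the functional equation \eqref{eq:necessaryBasisl}, kills the dependence on the output position $u$ by varying $v$ alone, and then specializes $\eta=\theta$, $\beta=\alpha$ to read off the convolutional form. You instead identify $(u,\theta,\alpha)$ with the group element $(\theta,\alpha,u)$, check that $D^{(l)}$ is the left-regular representation (which is correct: $g^{-1}\cdot(\theta,\alpha,u)=(\theta-\eta,\alpha-\beta,R_{-\eta}2^{-\beta}(u-v))$), and invoke the standard fact that an intertwiner of the regular representation is a group convolution with respect to the left Haar measure $2^{-2\alpha}\,d\theta\,d\alpha\,du$; the paper's specialization $\eta=\theta$, $\beta=\alpha$ is exactly your statement that the kernel depends only on $h^{-1}h'$, whose coordinates are $(\theta',\alpha',R_{-\theta}2^{-\alpha}u')$. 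Your version is more conceptual and explains where the density $2^{-2\alpha}$ comes from ($\rst$ is non-unimodular), and it treats the first layer correctly as the homogeneous-space case; the paper's version is elementary and self-contained. You are also slightly more careful on two points the paper glosses over: you derive that the bias is constant in $(u,\theta,\alpha)$ from transitivity of the action rather than assuming $b=b^{(l)}(\lambda)$ a priori, and you flag that stripping $\sigma$ needs a non-degeneracy hypothesis --- the paper's ``for the equality to hold we need'' step silently assumes this, and for ReLU it is not literally forced, a caveat both proofs share along with the assumption that the general layer admits an integral-kernel representation. One bookkeeping item to make explicit if you write this up: integrating against the left Haar measure produces a factor $2^{-2(\alpha+\alpha')}$, and the surplus $2^{-2\alpha'}$ must be absorbed into the definition of $W^{(l)}_{\lambda',\lambda}(\cdot,\theta',\alpha')$ to land exactly on \eqref{eq:lth-layer}.
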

We note that the joint-convolution \eqref{eq:lth-layer} is the group convolution over $\rst$ (whose subgroup $SO(2)\times \R$ is a non-compact Lie group acting on $\R^2$), which is known to achieve equivariance under regular representations \eqref{eq:group-action2} \cite{cohen2018general}. We provide an elementary proof of Proposition~\ref{thm:equivariance} in the appendix for completeness.

\section{Robust Equivariance to Input Deformation}
\label{sec:stability}

Proposition~\ref{thm:equivariance} details the network architecture to achieve $\rst$-equivariance for images modeled on the \textit{continuous} domain $\R^2$ undergoing a ``\textit{perfect}" $\rst$-transformation~\eqref{eq:group-action1}. However, in practice, symmetry transformations are rarely perfect, as they typically suffer from numerous source of input deformation coming from, for instance,  unstable camera position, change of weather, as well as practical issues such as numerical discretization and truncation (see, for example, Figure~\ref{fig:deformation}.) We explain, in this section, how to quantify and improve the representation stability of the model such that it stays ``approximately" $\rst$-equivariant even if the input transformation is ``contaminated" by minute local distortion (see, for instance, Figure~\ref{fig:deformation_robustness}.)

\begin{figure}
    \centering
    \includegraphics[width=.5\textwidth]{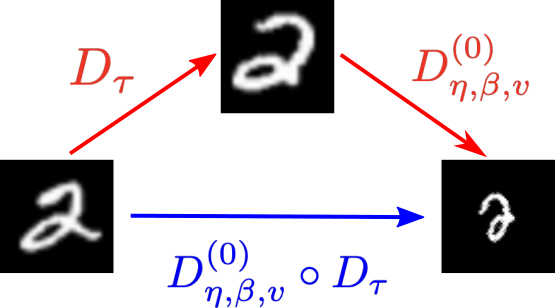}
    \caption{$\rst$-transformations in reality, e.g., the blue arrow, are rarely perfect, and they can be modeled as a composition of input deformation $D_\tau$ [cf.~\eqref{eq:spatial-deformation}] and an exact $\rst$-transformation $D_{\eta, \beta, v}^{(0)}$.}
    \label{fig:deformation}
\end{figure}


\subsection{Decomposition of Convolutional Filters}

In order to quantify the deformation stability of representation equivariance, motivated by \cite{dcfnet, Cheng_2018_rotdcf, zhu2019scale}, we leverage the geometry of the group $\rst$ and decompose the convolutional filters $W_{\lambda,\lambda'}^{(l)}(u,\theta,\alpha)$ under the separable product of three orthogonal function bases, $\{\psi_k(u)\}_k$, $\{\varphi_m(\theta)\}_m$, and $\{\xi_n(\alpha)\}_n$. In particular, we choose $\{\varphi_m\}_m$ as the Fourier basis on $S^1$, and set  $\{\psi_k\}_k$ and $\{\xi_n\}_n$ to be the eigenfunctions of the Dirichlet Laplacian over, respectively, the unit disk $D\subset \R^2$ and the interval $I_{\alpha}=[-1, 1]$, i.e.,
\begin{align}
  \label{eq:def-fb}
  \left\{
  \begin{aligned}
    \Delta \psi_k = -\mu_k \psi_k ~~ &\text{in}~ D,\\
    \psi_k = 0 ~~ &\text{on}~~ \partial D,
  \end{aligned}\right.
                      ~~ \left\{
  \begin{aligned}
    &\xi_n'' = -\nu_m \xi_n ~~ \text{in}~ I_\alpha\\
    &\xi_n(\pm1) = 0,
  \end{aligned}\right.
\end{align}
where $\mu_k$ and $\psi_n$ are the corresponding eigenvalues.
\begin{rmk}
\label{rmk:basis}
One has flexibility in choosing the spatial function basis $\{\psi_k\}_k$. We consider mainly, in this work, the rotation-steerable Fourier-Bessel (FB) basis \cite{Abramowitz_1965_handbook}  defined in \eqref{eq:def-fb}, as the spatial regularity of its low-frequency modes leads to robust equivariance to input deformation in an $\rst$-CNN, which will be shown in Theorem~\ref{thm:regularity_final}. One can also choose $\{\psi_k\}_k$ to be the eigenfunctions of Dirichlet Laplacian over the cell $[-1, 1]^2$ \cite{zhu2019scale}, i.e., the separable product of the solutions to the 1D Dirichlet Sturm–Liouville problem, which leads to a similar stability analysis. We denote such basis the Sturm-Liouvielle (SL) basis, and its efficacy will be compared to FB basis in Section~\ref{sec:experiments}.
\end{rmk}


Since spatial pooling can be modeled as rescaling the convolutional filters in space, we assume the filters $W^{(l)}_{\lambda',\lambda}$ are compactly supported on a rescaled domain as follows
\begin{align}
    W_{\lambda',\lambda}^{(1)}\in C_c(2^{j_1}D), ~W_{\lambda',\lambda}^{(l)}\in C_c(2^{j_l}D\times S^1\times I_\alpha),
\end{align}
where $j_l < j_{l+1}$ models a sequence of filters with decreasing size. Let $\psi_{j,k}(u) = 2^{-2j}\psi_k(2^{-j}u)$ be the rescaled spatial basis function, and we can decompose $W_{\lambda',\lambda}^{(l)}$ under $\{\psi_{j_l, k}\}_k$, $\{\varphi_m\}_m$, $\{\xi_n\}_n$ into
\begin{align}
    \label{eq:decompose_filter}
    & W_{\lambda',\lambda}^{(1)}(u) = \sum_k a_{\lambda',\lambda}^{(1)}(k)\psi_{j_1,k}(u)\\\nonumber
    & W_{\lambda',\lambda}^{(l)}(u,\theta,\alpha) = \sum_{k,m,n} a_{\lambda',\lambda}^{(l)}(k,m,n)\psi_{j_1,k}(u)\varphi_m(\theta)\xi_n(\alpha),
\end{align}
where $a_{\lambda',\lambda}^{(l)}$ are the expansion coefficients of the filters. We explain, in Section~\ref{subsec:stability}, the effect of $a_{\lambda,\lambda'}^{(l)}$ on the stability  of the $\rst$-equivariant representation to input deformation.

\subsection{Stability under Input Deformation}
\label{subsec:stability}
First, in order to gauge the distance between different inputs and features, we define the layer-wise feature norm as
\begin{align}
\label{eq:norms}
    &\left\|x^{(0)}\right\|^2 = \frac{1}{M_0}\sum_{\lambda=1}^{M_0} \int_{\R^2} |x^{(0)}(u, \lambda)|^2 du,\\\nonumber
    &\left\|x^{(l)}\right\|^2 = \sup_{\alpha} \frac{1}{M_l}\sum_{\lambda=1}^{M_l} \iint_{S^1\times \R^2} |x^{(l)}(u, \theta, \alpha, \lambda)|^2 dud\theta, \quad \forall l\ge 1,
\end{align}
i.e., the norm is a combination of an $L^2$-norm over the roto-translation group $SE(2)\cong S^1\times \R^2$ and an $L^\infty$-norm over the scaling group $\cs\cong\R$.

We next define the spatial deformation of an input image. Given a $C^2$ vector field $\tau:\R^2\to\R^2$, the spatial deformation $D_\tau$ on $x^{(0)}$ is defined as
\begin{align}
  \label{eq:spatial-deformation}
  &D_\tau x^{(0)}(u,\lambda) = x^{(0)}(\rho(u), \lambda),
\end{align}
where $\rho(u) = u-\tau (u)$. Thus $\tau(u)$ is understood as the local image distortion (at pixel location $u$), and $D_\tau$ is the identity map if $\tau(u)\equiv 0$, i.e., not input distortion.

The deformation stability of the equivariant representation can be quantified in terms of \eqref{eq:norms} after we make the following three mild assumptions on the model and the input distortion $D_{\tau}$:

\vspace{.2em}
\noindent\textbf{(A1):} The nonlinearity $\sigma:\R\to\R$ is non-expansive, i.e., $|\sigma(x)-\sigma(y)|\le |x-y|, ~\forall x,y\in \R$. For instance, the rectified linear unit (ReLU) satisfies this assumption.

\vspace{.2em}
\noindent\textbf{(A2):} The convolutional filters are bounded in the following sense: $A_l\le 1, \forall l\ge 1$, where
\begin{align}
\nonumber
    A_1 & \coloneqq \pi \max\Big\{\sup_\lambda \sum_{\lambda'}\|a_{\lambda',\lambda}^{(1)}\|_{\text{FB}}, \frac{M_0}{M_1}\sup_{\lambda'}\sum_{\lambda'}\|a_{\lambda',\lambda}^{(1)}\|_{\text{FB}}\Big\}\\ \nonumber
    A_l & \coloneqq \pi\max \Big\{\sup_\lambda \sum_{\lambda'}\sum_n\|a_{\lambda',\lambda}^{(l)}(\cdot, n)\|_{\text{FB}}, \frac{2M_{l-1}}{M_l}\sum_n\sup_{\lambda'}\sum_{\lambda}\|a_{\lambda',\lambda}^{(l)}(\cdot, n)\|_{\text{FB}}\Big\}, ~\forall l > 1,
\end{align}
in which the FB-norm $\|\cdot\|_{\text{FB}}$ of a sequence $\{a(k)\}_{k\ge 0}$ and double sequence $\{b(k,m)\}_{k,m\ge 0}$ is the weighted $l_2$-norm defined as
\begin{align}
\label{eq:norm-fb}
    \|a\|_{\text{FB}}^2= \sum_k\mu_k a(k)^2,  \|b\|_{\text{FB}}^2= \sum_k\sum_m\mu_k b(k,m)^2,
\end{align}
$\mu_k$ being the eigenvalues defined in \eqref{eq:def-fb}.

\vspace{.2em}
\noindent\textbf{(A3):} The local input distortion is  small: 
\begin{align}
    |\nabla \tau|_{\infty}\coloneqq \sup_u\|\nabla\tau (u)\|<1/5,    
\end{align}
where $\|\cdot\|$ is the operator norm.

\begin{figure}
    \centering
    \includegraphics[width=.6\textwidth]{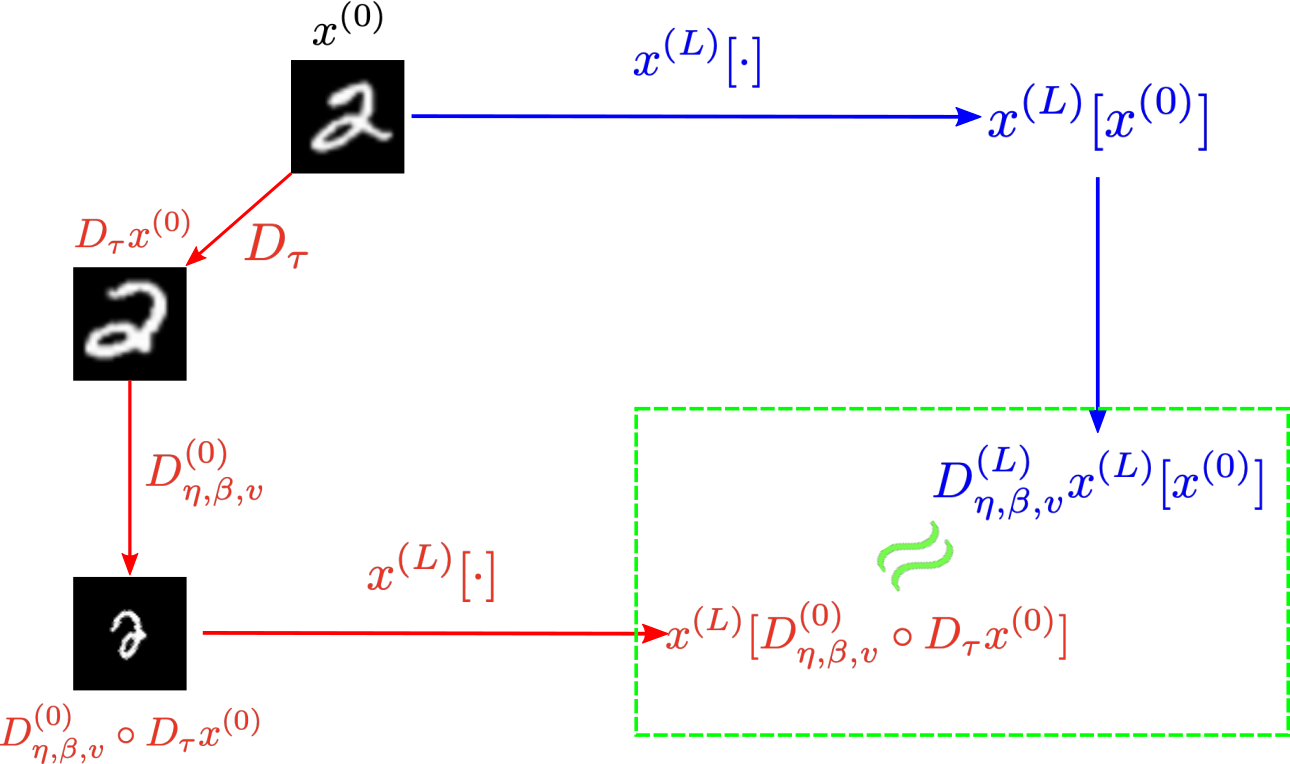}
    \caption{Approximate $\rst$-equivariance in the presence of nuisance input deformation $D_\tau$.}
    \label{fig:deformation_robustness}
\end{figure}

Theorem~\ref{thm:regularity_final} below quantifies the deformation stability of the equivariant representation in an $\rst$-CNN under the assumptions \textbf{(A1)}-\textbf{(A3)}:

\begin{thm}
  \label{thm:regularity_final}
  Let $D_{\eta, \beta, v}^{(l)}$, $0\le l\le L$, be the $\rst$ group actions defined in \eqref{eq:group-action1}\eqref{eq:group-action2}, and let $D_\tau$ be a small input deformation define in \eqref{eq:spatial-deformation}. If an $\rst$-CNN satisfies the assumptions \textbf{(A1)}-\textbf{(A3)}, we have, for any $L$,
  \begin{flalign}
    \label{eq:regularity_final}
    & \left\| x^{(L)}[D_{\eta,\beta, v}^{(0)}\circ D_\tau x^{(0)}]-D_{\eta,\beta, v}^{(L)}x^{(L)}[x^{(0)}] \right\| \le 2^{\beta+1}\left( 4L|\nabla \tau|_\infty + 2^{-j_L}|\tau|_\infty  \right)\|x^{(0)}\|.
  \end{flalign}
\end{thm}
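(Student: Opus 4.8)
The plan is to decouple the \emph{exact} group action from the genuine deformation error. First I would invoke Proposition~\ref{thm:equivariance}: since the network is exactly equivariant under a perfect $\rst$-transformation, $x^{(L)}[D^{(0)}_{\eta,\beta,v}\circ D_\tau x^{(0)}]=D^{(L)}_{\eta,\beta,v}\,x^{(L)}[D_\tau x^{(0)}]$, so by linearity of the representation the left-hand side of \eqref{eq:regularity_final} equals $\|D^{(L)}_{\eta,\beta,v}\big(x^{(L)}[D_\tau x^{(0)}]-x^{(L)}[x^{(0)}]\big)\|$. A change of variables $w=R_{-\eta}2^{-\beta}(u-v)$ in \eqref{eq:group-action2} shows that $D^{(L)}_{\eta,\beta,v}$ scales the norm \eqref{eq:norms} by exactly $2^{\beta}$: the rotation $R_{-\eta}$ is an $L^2$-isometry, the spatial dilation contributes a Jacobian $2^{2\beta}$, the $\theta$-shift leaves the normalized $S^1$-integral invariant, and the $\alpha$-shift leaves the $\sup_\alpha$ invariant. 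Thus it suffices to prove the deformation-sensitivity bound $\|x^{(L)}[D_\tau x^{(0)}]-x^{(L)}[x^{(0)}]\|\le 2\big(4L|\nabla\tau|_\infty+2^{-j_L}|\tau|_\infty\big)\|x^{(0)}\|$, after which multiplying by $2^{\beta}$ yields \eqref{eq:regularity_final}.

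For the sensitivity bound I would establish two layer-wise estimates. \textbf{(i) Non-expansiveness:} using that $\sigma$ is $1$-Lipschitz by \textbf{(A1)} and that the operator norm of the joint convolution \eqref{eq:lth-layer} on the norm \eqref{eq:norms} is bounded by $A_l$ — which is exactly what the definition of $A_l$ in \textbf{(A2)} encodes, via a Schur-type test over the $\lambda$-sums combined with the $\mu_k$-weighted FB-norm \eqref{eq:norm-fb} — one gets $\|x^{(l)}[y]-x^{(l)}[z]\|\le A_l\|y-z\|\le\|y-z\|$ and $\|x^{(l)}[y]\|\le\|y\|$. \textbf{(ii) Approximate commutation with deformation:} introducing the spatial deformation $D_\tau^{(l)}x(u,\theta,\alpha,\lambda)=x(\rho(u),\theta,\alpha,\lambda)$ on each feature space, I would prove $\|x^{(l)}[D_\tau^{(l-1)}y]-D_\tau^{(l)}x^{(l)}[y]\|\le 4|\nabla\tau|_\infty\|y\|$. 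Because $\sigma$ is pointwise it commutes with $D_\tau^{(l)}$ and the bias cancels, so this reduces to the commutator of the convolution with $D_\tau$; a change of variables turns the deformed convolution into one against a spatially warped filter, and the discrepancy between the original and warped filters is controlled, through the regularity of the low-frequency FB modes, by $|\nabla\tau|_\infty$ times $\|a^{(l)}_{\lambda',\lambda}\|_{\text{FB}}$ — here the eigenvalue weights $\mu_k$ bound $\|\nabla_u\psi_k\|$ — and hence by $A_l\le1$.

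With these in hand I would telescope. Let $\tilde x^{(l)}$ denote the clean feature at layer $l$, and define hybrid signals $z_l$ obtained by propagating $x^{(0)}$ cleanly through layers $1,\dots,l$, applying $D_\tau^{(l)}$, and then propagating through layers $l+1,\dots,L$, so that $z_0=x^{(L)}[D_\tau x^{(0)}]$ and $z_L=D_\tau^{(L)}x^{(L)}[x^{(0)}]$. Since layers $l+1,\dots,L$ are non-expansive by (i), each increment obeys $\|z_{l-1}-z_l\|\le\|x^{(l)}[D_\tau^{(l-1)}\tilde x^{(l-1)}]-D_\tau^{(l)}x^{(l)}[\tilde x^{(l-1)}]\|\le 4|\nabla\tau|_\infty\|\tilde x^{(l-1)}\|\le 4|\nabla\tau|_\infty\|x^{(0)}\|$, using (ii) and $\|\tilde x^{(l-1)}\|\le\|x^{(0)}\|$, giving $\|x^{(L)}[D_\tau x^{(0)}]-D_\tau^{(L)}x^{(L)}[x^{(0)}]\|\le 4L|\nabla\tau|_\infty\|x^{(0)}\|$. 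It remains to remove the residual output deformation $\|(D_\tau^{(L)}-\text{Id})\tilde x^{(L)}\|$: again using that $\sigma$ commutes with $D_\tau^{(L)}$ and is non-expansive, I would bound it by the spatial modulus of continuity of the layer-$L$ pre-activation, and since that filter lives at scale $2^{j_L}$, differentiating the rescaled basis $\psi_{j_L,k}$ produces a factor $2^{-j_L}$, so $\|(D_\tau^{(L)}-\text{Id})\tilde x^{(L)}\|\le 2^{-j_L}|\tau|_\infty\|x^{(0)}\|$. A triangle inequality combines the two contributions; the remaining overall factor $2$ (so that $2^{\beta+1}$ rather than $2^{\beta}$ appears) is absorbed from the bounded Jacobian $\det(I-\nabla\tau)$ of $\rho$ entering the norms of the $D_\tau^{(l)}$, controlled under \textbf{(A3)} by $|\nabla\tau|_\infty<1/5$.

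The main obstacle is the per-layer commutator estimate (ii). The difficulty is that the joint convolution \eqref{eq:lth-layer} entangles the spatial variable $u$ — the only variable on which $\tau$ acts — with the group variables $\theta,\alpha$ through the argument $2^{-\alpha}R_{-\theta}u'$, so deforming the input replaces the clean filter by a warped one whose closeness to the original must be measured in the operator norm \emph{uniformly in} $\alpha$, since the feature norm \eqref{eq:norms} takes a supremum over scale. Showing that this warped-filter discrepancy is $O(|\nabla\tau|_\infty)$ is precisely where the choice of the Fourier--Bessel basis and the $\mu_k$-weighted FB-norm is indispensable, as the eigenvalue weighting is exactly calibrated to control the filter gradients introduced by the first-order expansion of $\rho(u)=u-\tau(u)$; carrying all constants through the rescaling while keeping them independent of $\alpha$ is the delicate bookkeeping at the technical heart of the proof.
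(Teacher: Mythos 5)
Your proposal follows essentially the same route as the paper's proof: exact equivariance plus the fact that $D^{(L)}_{\eta,\beta,v}$ scales the norm \eqref{eq:norms} by exactly $2^{\beta}$ (the paper's Proposition~4(b)) to factor out the group action, layer-wise non-expansiveness from \textbf{(A1)}--\textbf{(A2)} via the FB-norm, a per-layer approximate commutation with $D_\tau$ of order $|\nabla\tau|_\infty$ that is telescoped over the $L$ layers, and a final $2^{-j_L}|\tau|_\infty$ term to remove the residual output deformation (the paper's Propositions~3 and~4 and the concluding triangle inequality). The only differences are bookkeeping of constants (the paper's per-layer bound is $8|\nabla\tau|_\infty$ via $4(B_l+C_l)$ rather than your $4|\nabla\tau|_\infty$ plus a globally absorbed factor of $2$), which lands at the same final bound.
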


The proof of Theorem \ref{thm:regularity_final} is deferred to the Appendix. An important message from Theorem \ref{thm:regularity_final} is that as long as \textbf{(A1)}-\textbf{(A3)} are satisfied, the model stays approximately $\rst$-equivariant, i.e., $x^{(L)}[D_{\eta,\beta, v}^{(0)}\circ D_\tau x^{(0)}]\approx D_{\eta,\beta, v}^{(L)}x^{(L)}[x^{(0)}]$, even with the presence of non-zero (yet small) input deformation $D_{\tau}$ (see, e.g., Figure~\ref{fig:deformation_robustness}.)
\begin{rmk}
The fact that the right hand side of \eqref{eq:regularity_final} grows exponentially with $\beta$ is inevitable, as it comes naturally from the definition of the norm \eqref{eq:norms}: if an image is spatially rescaled by $2^\beta$ without correspondingly scaling the color range (i.e., pixel intensity), its $L^2$-norm is enlarged by $2^\beta$.
\end{rmk}
\begin{rmk}
According to the definition of the FB-norm \eqref{eq:norm-fb}, the main assumption \textbf{(A2)} can be facilitated by truncating the filter expansion \eqref{eq:decompose_filter} to include only low-frequency (small $\mu_k$) components. The implementation detail of such truncated filter expansion will be explained in detail in  Section~\ref{sec:implementation}.
\end{rmk}

\section{Implementation Details}
\label{sec:implementation}

We next discuss the implementation details of the $\rst$-CNN outlined in Proposition~\ref{thm:equivariance}.

\noindent\textbf{Discretization.} To implement $\rst$-CNN in practice, we first need to discretize the features $x^{(l)}$ modeled originally under the continuous setting. First,  the input signal $x^{(0)}(u, \lambda)$ is discretized on a uniform grid into a 3D array of shape $[M_0, H_0, W_0]$, where $H_0, W_0, M_0$, respectively, are the height, width, and the number of the unstructured channels of the input (e.g., $M_0 = 3$ for RGB images.) For $l\ge 1$, the rotation group $S^1$ is uniformly discretized into $N_r$ points; the scaling group $\cs\cong\R^1$, unlike $S^1$, is unbounded, and thus features $x^{(l)}$ are computed and stored only on a truncated scale interval $I = [-T, T]\subset \R$, which is uniformly discretized into $N_s$ points. The feature $x^{(l)}$ is therefore stored as a 5D array of shape $[M_l, N_r, N_s, H_l, W_l]$.

\noindent\textbf{Filter expansion.} The analysis in Section~\ref{sec:stability} suggests that robust $\rst$-equivariance is achieved if the convolutional filters are expanded with only the first $K$ low-frequency spatial modes $\{\psi_k\}_{k=1}^K$. More specifically, the first $K$ spatial basis functions as well as their rotated and rescaled versions $\{2^{-2\alpha}\psi_k(2^{-\alpha} R_{-\theta}u')\}_{k,\theta, \alpha,u'}$ are sampled on a uniform grid of size $L\times L$ and stored as an array of size $[K, N_r, N_s, L, L]$, which is fixed during training. The expansion coefficients $a_{\lambda',\lambda}^{(l)}$, on the other hand, are the trainable parameters of the model, which are used together with the fixed basis to linearly expand the filters. The resulting filters $\{2^{-2\alpha}W_{\lambda',\lambda}^{(1)}(2^{-\alpha}R_{-\theta}u')\}_{\lambda',\lambda, \theta, \alpha,u'}$ and $\{2^{-2\alpha}W_{\lambda',\lambda}^{(l)}(2^{-\alpha}R_{-\theta}u', \theta',\alpha')\}_{\lambda',\lambda, \theta, \theta', \alpha, \alpha',u'}$ are stored, respectively, as tensors of size $[M_0, M_1, N_r, N_s, L, L]$ and $[M_{l-1}, M_l, N_r, L_\theta, N_s, L_{\alpha}, L, L]$, where $L_\alpha$ is the number of grid points sampling the interval $I_\alpha$ in \eqref{eq:def-fb}, and $L_{\theta}$ is the number of grid points sampling $S^1$ on which the integral $\int_{S^1}d\theta$ in \eqref{eq:lth-layer} is performed.
\begin{rmk}
\label{rmk:inter-rot-scale}
The number $L_\alpha$ measures the support $I_\alpha$ \eqref{eq:def-fb} of the convolutional filters in scale, which corresponds to the amount of ``inter-scale" information transfer when performing the convolution over scale $\int_{\R}(\cdots)d\alpha'$ in  \eqref{eq:lth-layer}. It is typically chosen to be a small number (e.g., 1 or 2) to avoid the ``boundary leakage effect" \cite{worrall2019deep, sosnovik2019scale,zhu2019scale}, as one needs to pad unknown values beyond the truncated scale channel $[-T, T]$ during convolution \eqref{eq:lth-layer} when $L_\alpha > 1$. The number $L_\theta$, on the other hand, corresponds to the  ``inter-rotation" information transfer when performing the convolution over the rotation group $\int_{S^1}(\cdots)d\theta'$ in  \eqref{eq:lth-layer}; it does not have to be small since periodic-padding of known values is adopted when conducting integrals on $S^1$ with no ``boundary leakage effect". We only require $L_\theta$ to divide $N_r$ such that $\int_{S^1}(\cdots)d\theta'$ is computed on a (potentially) coarser grid (of size $L_\theta$) compared to the finer grid (of size $N_r$) on which we discretize the rotation channel of the feature $x^{(l)}$.
\end{rmk}

\noindent\textbf{Discrete convolution.} After generating, in the previous step, the discrete joint convolutional filters together with their rotated and rescaled versions, the continuous convolutions in Proposition~\ref{thm:equivariance} can be efficiently implemented using regular 2D discrete convolutions.

More specifically, let $x^{(0)}(u,\lambda)$ be an input image of shape $[M_0, H_0, W_0]$. A total of $N_r\times N_s$ discrete 2D convolutions with the rotated and rescaled filters $\{2^{-2\alpha}\psi_k(2^{-\alpha} R_{-\theta}u')\}_{k,\theta, \alpha,u'}$, i.e., replacing the spatial integrals in \eqref{eq:lth-layer} by summations, are conducted to obtain the first-layer feature $x^{(1)}(u, \theta, \alpha, \lambda)$ of size $[M_1, N_r, N_s, H_1, W_1]$. For the subsequent layers, given a feature $x^{(l-1)}(u, \theta, \alpha, \lambda)$ of shape $[M_{l-1}, N_r, N_s, H_{l-1}, W_{l-1}]$ and the joint filters $F^{(l)}=\{2^{-2\alpha}W_{\lambda',\lambda}^{(l)}(2^{-\alpha}R_{-\theta}u', \theta',\alpha')\}_{\lambda',\lambda, \theta, \theta', \alpha, \alpha',u'}$ of size $[M_{l-1}, M_l, N_r, L_\theta, N_s, L_{\alpha}, L, L]$, the next-layer feature $x^{(l)}$ is computed in the following way: for each $l_\alpha\in [0, L_\alpha-1]$ and $l_\theta\in [0, L_\theta -1]$, we shift the signal $x^{(l-1)}$ in the scale channel by $l_{\alpha}$ and in the rotation channel by $l_\theta N_r/L_\theta$, which is then convolved with the filter $F[:, :, :, l_\theta, :, l_\alpha, :, :]$ (after proper reshaping and combining adjacent dimensions) to produce an output array of shape $[M_l, N_r, N_s, H_l, W_l]$. The $l$-th layer feature map $x^{(l)}(u, \theta, \alpha, \lambda)$ is then computed as the sum of the $L_\theta\times L_\alpha$ tensors obtained by iterating over $l_\theta \in [0, L_\theta-1]$ and $l_\alpha \in [0, L_\alpha-1]$.

\noindent\textbf{Group pooling.} For learning tasks where the outputs are supposed to remain unchanged to $\rst$-transformed inputs, e.g., image classification, a max-pooling over the entire group $\rst$ is performed on the last-layer feature $x^{(L)}(u, \theta, \alpha, \lambda)$ of shape $[M_L, N_r, N_s, H_L, M_L]$ to produce an $\rst$-invariant 1D output of length $M_L$. We only perform the $\rst$ group-pooling in the last layer without explicit mention.


\section{Numerical Experiments}
\label{sec:experiments}

We conduct numerical experiments, in this section, to demonstrate:
\begin{itemize}
    \item The proposed model indeed achieves robust $\rst$-equivariance under realist settings.
    \item $\rst$-CNN yields remarkable gains over prior arts in vision tasks with intrinsic $\rst$-symmetry, especially in the small data regime.
\end{itemize}
Software implementation of the experiments is included in the supplementary materials.

\subsection{Data Sets and Models}
\label{sec:data}

We conduct the experiments on the Rotated-and-Scaled MNIST (RS-MNIST), Rotated-and-Scaled Fashion-MNIST (RS-Fashion), as well as the STL-10 data sets ~\cite{stl}.

RS-MNIST and RS-Fashion are constructed through randomly rotating (by an angle uniformly distributed on $[0, 2\pi]$) as well as rescaling (by a uniformly random factor from [0.3, 1]) the original MNIST ~\cite{lecun1998gradient} and Fashion-MNIST ~\cite{Xiao_2017_fashion} images. The transformed images are zero-padded back to a size of  $28 \times 28$. We upsize the image to $56\times 56$ for better comparison of the models.

The STL-10 data set has 5,000 training and 8,000 testing RGB images of size $96\times 96$ belonging to 10 different classes such as cat, deer, and dog. We use this data set to evaluate different models under both  in-distribution (ID) and out-of-distribution (OOD) settings. More specifically, the training set remains unchanged, while the testing set is either unaltered for ID testing, or randomly rotated (by an angle uniformly distributed on $[-\pi/2, \pi/2]$) and rescaled (by a factor uniformly distributed on $[0.8, 1]$) for OOD testing.

We evaluate the performance of the proposed $\rst$-CNN against other models that are equivariant to either roto-translation ($SE(2)$) or scale-translation ($\cst$) of the inputs. The $SE(2)$-equivariant models consider in this section include the Rotation Decomposed Convolutional Filters network (RDCF)~\cite{Cheng_2018_rotdcf} and the Rotation Equivariant Steerable Network (RESN)~\cite{Weiler_2018_learning}, which is shown to achieve best performance among all $SE(2)$-equivariant CNNs in~\cite{general_e2}. The $\cst$-equivariant models include the Scale Equivariant Vector Field Network (SEVF)~\cite{Marcos_2018_scale}, Scale Equivariant Steerable Network (SESN)~\cite{sosnovik2019scale}, and Scale Decomposed Convolutional Filters network (SDCF)~\cite{zhu2019scale}.


\subsection{Equivariance Error}
\label{expr:equi_err}
We first measure the $\rst$-equivariance error of our proposed model with the presence of discretization and scale channel truncation. More specifically, we construct a 5-layer $\rst$-CNN with randomly initialized expansion coefficients $a_{\lambda',\lambda}^{(l)}$ truncated to $K=5$ or $K = 10$ low-frequency spatial (FB) modes $\{\psi_k\}_{k=1}^K$. The scale channel is truncated to $[-1, 1]$, which is uniformly discretized into $N_s = 9$ points; the rotation group $S^1$ is sampled on a uniform grid of size $N_r = 8$. The $\rst$ equivariance error is computed on random RS-MNIST images, and measured in a relative $L^2$ sense at the scale $\alpha = 0$ and rotation $\theta = 0$, with the $\rst$-action corresponding to the group element $(\eta, \beta, v) = (-\pi/2, -0.5, 0)$, i.e.,
\begin{align}
\label{eq:equivariance-error}
    \frac{\big\|\big(x^{(l)}[D_{\eta, \beta, v}x^{(0)}] - D_{\eta, \beta, v}x^{(l)}[x^{(0)}]\big)(\cdot, \alpha, \theta)\big\|_{L^2}}{\big\|D_{\eta, \beta, v}x^{(l)}[x^{(0)}](\cdot, \alpha, \theta)\big\|_{L^2}}
\end{align}
We fix $L_\theta$, i.e., the number of the ``inter-rotation" channels corresponding to the ``coarser" grid of $S^1$ for discrete $S^1$ integration, to $L_\theta = 4$, and examine the equivariance error induced by the  ``boundary leakage effect" with different numbers $L_\alpha$ of the ``inter-scale" channels [cf. Remark~\ref{rmk:inter-rot-scale}].

\begin{figure}
    \centering
      \begin{subfigure}[t]{0.4\textwidth}
        \includegraphics[width=\textwidth]{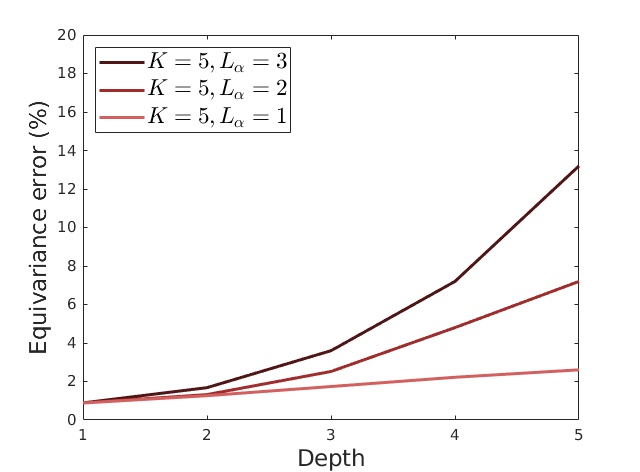}
        \caption{Equivariance error, $K = 5$. }
        \label{fig:K_5}
    \end{subfigure}    
      \begin{subfigure}[t]{0.4\textwidth}
        \includegraphics[width=\textwidth]{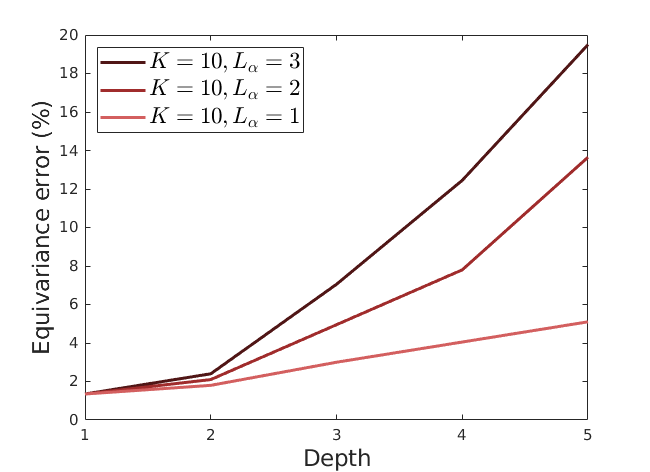}
        \caption{Equivariance error, $K = 10$. }
        \label{fig:K_10}
    \end{subfigure}
    \caption{Layer-wise equivariance error \eqref{eq:equivariance-error} of the $\rst$-CNN. Convolutional filters with varying number $L_\alpha$ of ``inter-scale" channels are constructed from $K$ low-frequency spatial modes $\{\psi_k\}_{k=1}^K$. The error is smaller when $K$ decreases, verifying our theoretical analysis Theorem~\ref{thm:regularity_final}.   }
    \label{fig:equivariance_error}
\end{figure}

Figure~\ref{fig:equivariance_error} displays the equivariance error \eqref{eq:equivariance-error} of the $\rst$-CNN at different layers $l\in \{1, \cdots, 5\}$ with varying $L_\alpha\in \{1, 2,  3\}$. It can be observed that the equivariance error is inevitable due to numerical discretization and truncation as the model goes deeper. However, it can be mitigated by choosing a small $L_\alpha$, i.e., less ``inter-scale" information transfer, to avoid the ``boundary leakage effect", or expanding the filters with a small number $K$ of low-frequency spatial components $\{\psi_k\}_{k=1}^K$, supporting our theoretical analysis Theorem~\ref{thm:regularity_final}. Due to this finding, we will consider $\rst$-CNNs with $L_\alpha=1$ in the following experiments, a practice that is adopted also in \cite{worrall2019deep, sosnovik2019scale}.

\subsection{Image Classification}
\label{sec:classification}

We next demonstrate the superior performance of the proposed $\rst$-CNN in image classification under settings where a large variation of rotation and scale is present in the test and/or the training data.

\subsubsection{RS-MNIST And RS-Fashion}
  \begin{table}[t]
  \centering
  \scriptsize
  \resizebox{\columnwidth}{!}{
  \begin{tabular}{lccccccc}
    \toprule
    &\multicolumn{2}{c}{RS-MNIST test accuracy (\%)}&\multicolumn{2}{c}{RS-MNIST+ test accuracy (\%)}\\
    \cmidrule(r){2-3} \cmidrule(r){4-5}
    Models   &$N_{\text{tr}} = 2,000$ & $N_{\text{tr}} = 5,000$ &$N_{\text{tr}} = 2,000$ & $N_{\text{tr}} = 5,000$ \\
    \midrule
    CNN & $80.63 \pm 0.11$ & $87.41 \pm 0.32$ & $89.52 \pm 0.21$ & $93.08 \pm 0.04$\\
    SFCNN  & $85.71 \pm 0.18$ & $89.69 \pm 0.40$ & $90.86 \pm 0.20$ & $93.78 \pm 0.35$  \\
    SFCNN+  & $87.96 \pm 0.05$ & $92.29 \pm 0.10$ & $92.20 \pm 0.09$ & $95.32 \pm 0.09$  \\
    RDCF & $86.79 \pm 0.12$ & $90.46 \pm 0.33$ & $90.40 \pm 0.31$ & $94.84 \pm 0.15$\\
    RDCF+ & $87.68 \pm 0.52$ & $91.74 \pm 0.37$ & $92.49 \pm 0.28$ & $95.81 \pm 0.11$\\
    SEVF & $85.76 \pm 0.38$ & $90.29 \pm 0.37$ & $89.97\pm 0.42$ & $93.47 \pm 0.18$\\
    SESN & $84.58 \pm 0.29$ & $90.19 \pm 0.39$	 & $90.33 \pm 0.30$ & $93.40 \pm 0.29$\\
    SDCF & $85.62 \pm 0.51$ & 	$90.40 \pm 0.09$ & $90.14 \pm 0.16$ & $93.47 \pm 0.05$\\
    \midrule
    $\rst$-CNN(FB) & $89.16 \pm 0.32$ & $93.19 \pm 0.29$ & $92.58 \pm 0.35$ & $96.33 \pm 0.26$\\
    $\rst$-CNN(SL) & $88.97 \pm 0.17$ & $93.03 \pm 0.20$ & $92.30 \pm 0.19$ & $96.04 \pm 0.11$\\
    $\rst$-CNN+(FB) & $89.53 \pm 0.27$ & $93.40 \pm 0.26$ & {\scriptsize $\textbf{93.99} \pm \textbf{0.07}$} & $96.53 \pm 0.25$\\
    $\rst$-CNN+(SL) & {\scriptsize $\textbf{90.26} \pm \textbf{0.37}$} & {\scriptsize$\textbf{93.59} \pm \textbf{0.06}$} & $93.82 \pm 0.25$ & {\scriptsize$\textbf{96.76} \pm \textbf{0.11}$}\\
    \bottomrule
  \end{tabular}
  }
  \vspace{-.5em}
  \caption{Classification accuracy on the RS-MNIST data set. Models are trained on $N_\text{tr}=$ 2K or 5K images with spatial resolution $56\times 56$. A plus sign ``+'' on the data, i.e., RS-MNIST+, is used to denote the presence of  data augmentation during training. A plus sign ``+'' on the model, e.g., RDCF+, denotes a larger network with more ``inter-rotation" correlation $L_\theta = 4$ [cf. Section~\ref{sec:implementation}]. The mean~$\pm$~std of the test accuracy over five independent trials are reported.}  \label{tab:acc_smnist}
\end{table}

\begin{table}[t]
  \centering
  \scriptsize
  \resizebox{\columnwidth}{!}{
  \begin{tabular}{lccccccc}
    \toprule
    &\multicolumn{2}{c}{RS-Fashion test accuracy (\%)}&\multicolumn{2}{c}{RS-Fashion+ test accuracy (\%)}\\
    \cmidrule(r){2-3} \cmidrule(r){4-5}
    Models   &$N_{\text{tr}} = 2,000$ & $N_{\text{tr}} = 5,000$ &$N_{\text{tr}} = 2,000$ & $N_{\text{tr}} = 5,000$ \\
    \midrule
    CNN & $62.71 \pm 0.37$ & $67.91 \pm 0.28$ & $67.92 \pm 0.12$ & $72.41 \pm 0.46$  \\
    SFCNN & $70.80 \pm 0.41$ & $75.80 \pm 0.11$ & $74.10 \pm 0.46$ & $77.76 \pm 0.16$\\
    SFCNN+ & $71.59 \pm 0.71$ & $76.32 \pm 0.26$ & $76.80 \pm 0.55$ & $80.89 \pm 0.41$  \\
    RDCF & $70.72 \pm 0.10$ & $73.96 \pm 0.19$ & $73.46 \pm 0.10$ & $77.53 \pm 0.11$  \\
    RDCF+ & $71.27 \pm 0.34$ & $75.94 \pm 0.35$ & $76.87 \pm 0.57$ & $80.66 \pm 0.48$  \\
    SEVF & $66.57 \pm 0.32$ & $71.03 \pm 0.31$ & $68.83 \pm 0.52$ & $73.25\pm 0.22$\\
    SESN & $66.28 \pm 0.14$ & $72.19 \pm 0.05$ & $69.43 \pm 0.07$ & $75.85 \pm 0.26$ \\
    SDCF & $66.29\pm 0.23$ & $72.24 \pm 0.23$ & $68.40 \pm 0.05$ & $75.11 \pm 0.18$\\
    \midrule
    $\rst$-CNN(FB) & $73.31 \pm 0.16$ & $78.64 \pm 0.60$ & $76.43 \pm 0.59$ & $81.93 \pm 0.04$ \\ 
    $\rst$-CNN(SL) & $72.90\pm 0.34$ & $78.37 \pm 0.22$ & $76.06 \pm 0.13$ & $80.81 \pm 0.29$ \\ 
    $\rst$-CNN+(FB) & $74.37\pm 0.08$ & $79.19 \pm 0.36$ & {\scriptsize $\textbf{80.65} \pm \textbf{0.31}$} & {\scriptsize $\textbf{84.37} \pm \textbf{0.19}$} \\ 
    $\rst$-CNN+(SL) & {\scriptsize $\textbf{74.68} \pm \textbf{0.29}$} & {\scriptsize $\textbf{79.81} \pm \textbf{0.06}$} & $80.65 \pm 0.46$ & $84.09 \pm 0.09$ \\ 
    \bottomrule
  \end{tabular}
  }
  \vspace{-.5em}
  \caption{Classification accuracy on the RS-Fashion data set. Models are trained on $N_\text{tr}=$ 2K or 5K images with spatial resolution $56\times 56$. A plus sign ``+'' on the data, i.e., RS-Fashion+, is used to denote the presence of  data augmentation during training. A plus sign ``+'' on the model, e.g., RDCF+, denotes a larger network with more ``inter-rotation" correlation $L_\theta = 4$ [cf. Section~\ref{sec:implementation}]. The mean~$\pm$~std of the test accuracy over five independent trials are reported.} \label{tab:acc_sfashion}
\end{table}

We first benchmark the performance of different models on the RS-MNIST and RS-Fashion data sets. We generate five independent realizations of the rotated and rescaled data [cf. Secion~\ref{sec:data}], which are split into $N_{\text{tr}} =$ 5,000 or 2,000 images for training, 2,000 images for validation, and 50,000 images for testing.

For fair comparison among different models, we use a benchmark CNN with three convolutional and two fully-connected layers as a baseline. 
Each hidden layer (i.e., the three convolutional and the first fully-connected layer) is followed by a batch-normalization, and we set the number of output channels of the hidden layers  to $[32, 63, 95, 256]$. The size of the convolutional filters is set to $7\times 7$ for each layer in the CNN. All comparing models (those in Table~\ref{tab:acc_smnist}-\ref{tab:acc_stl} without ``+" after the name of the model) are built on the same CNN baseline, and we keep the trainable parameters almost the same ($\sim$500K)  by modifying the number of the unstructured channels. For models that are equivariant to rotation (including SFCNN, RDCF, and $\rst$-CNN), we set the number $N_r$ of rotation channels to $N_r = 8$ [cf. Section~\ref{sec:implementation}]; for scale-equivariant models (including SESN, SDCF, and $\rst$-CNN), the number $N_s$ of scale channels is set to $N_s = 4$. In addition, for rotation-equivariant CNNs, we also construct larger models (with the number of trainable parameters$\sim$1.6M) after increasing the ``inter-rotation" information transfer [cf. Section~\ref{sec:implementation}] by setting $L_\theta = 4$; we attach a ``+" symbol to the end of the model name (e.g., RDCF+) to denote such larger models with more ``inter-rotation". A group max-pooling is performed only after the final convolutional layer to achieve group-invariant representations for classification. Moreover, for $\rst$-CNN, we consider two different spatial function basis for filter expansion, namely the Fourier-Bessel (FB) basis, and Sturm-Liouville (SL) basis [cf. Remark~\ref{rmk:basis}].


We use the Adam optimizer~\cite{kingma2014adam} to train all models for 60 epochs with the batch size set to 128. We set the initial learning rate to 0.01, which is scheduled to decrease tenfold after 30 epochs. We conduct the experiments in 4 different settings, where the number $N_{\text{tr}}$ of training samples is either 2,000 or 5,000, and the models are trained with or without $\rst$  data augmentation.

We report the mean~$\pm$~std of the test accuracy after five independent trials in Table~\ref{tab:acc_smnist} and Table~\ref{tab:acc_sfashion}, where, for example, RS-MNIST (or RS-MNIST+) denotes models are trained on the RS-MNIST data set without (or with) data augmentation. 
It is clear from Table~\ref{tab:acc_smnist} and Table~\ref{tab:acc_sfashion} that $\rst$-CNN has superior generalization capability compared to other models with approximately the same trainable parameters, especially in the small data regime. Furthermore, $\rst$-CNN+ with more inter-rotation correlation (i.e., $L_\theta = 4$) has further improved performance, achieving the best accuracy among all methods. It can also be observed that neither of the two spatial basis functions (i.e., FB and SL) has significant advantage over the other. The preferred choice of the basis for filter expansion could depend on experimental settings including sample size, the number of filters, original data characteristics and the presence (or lack thereof) of data augmentation.

\subsubsection{STL-10}

\begin{table}[t]
  \centering
  \begin{tabular}{lcc}
    \toprule  
    Models & ID Accuracy (\%) & OOD Accuracy (\%) \\\midrule
    ResNet-16 & $82.66\pm 0.53$ & $37.63 \pm 1.95$\\
    SFCNN & $83.84\pm 0.67$ & $51.28\pm 2.29$\\
    RDCF & $83.66\pm 0.57$ & $51.12\pm 4.21$\\    
    SESN & $83.79\pm 0.24$ & $47.26\pm 0.63$\\
    SDCF & $83.83 \pm 0.41$ & $43.60 \pm 0.87$\\    
    \midrule
    $\rst$-CNN & { $\textbf{84.08}\pm \textbf{0.11}$} & { $\textbf{58.31}\pm \textbf{3.62}$}\\
    \bottomrule
  \end{tabular}
  \caption{Test accuracy on the STL-10 data set for both in-distribution (ID) and out-of-distribution (OOD) settings.}\label{tab:acc_stl}
\end{table}

Finally, we use the STL-10 data set as an example to showcase both ID and OOD generalization capacity of the proposed $\rst$-CNN. As explained in Section~\ref{sec:data}, the training set remains unaltered, while the testing set is either (a) unchanged for ID testing, or (b) randomly rotated and rescaled for OOD testing. All models are built on a ResNet \cite{resnet} with 16 layers as the baseline, and we keep the number of trainable parameters almost the same for comparing networks. A group max-pooling is performed after the final residual block to achieve invariant representations.

Similar to the idea of ~\cite{sosnovik2019scale, zhu2019scale}, the data set is augmented (\textit{without} rotation and rescaling) during training by randomly cropping a 12 pixel zero-padded image. Furthermore, random horizontal flipping and Cutout \cite{devries2017improved} with 32 pixels are applied to the cropped image. We train all models for 1000 epochs with a batch size of 64, using an SGD optimizer with Nesterov momentum set to 0.9 and weight decay set to $5\times 10^{-4}$. Learning rate starts at $0.1$, and is scheduled to decrease tenfold after 300, 400, 600, and 800 epochs.

The mean~$\pm$~std after three independent trials of both ID and OOD test accuracy is displayed in Table~\ref{tab:acc_stl}. It can be observed that the proposed $\rst$-CNN significantly outperforms all comparing models, especially for OOD generalization, demonstrating further its advantage in computer vision where both rotation and scale transformations are intrinsic symmetries of the learning task.




\appendix

\section{Proof of Proposition~\ref{thm:equivariance}}

\subsection{Sufficient Part of the Proof of Proposition~\ref{thm:equivariance}}
\begin{proof}
  We first prove the sufficient part of Proposition~\ref{thm:equivariance}. That is, given the layer-wise definition of the $L$-layer feedforward neural network \eqref{eq:1st-layer} and \eqref{eq:lth-layer},  $\rst$-equivariance under the regular representation \eqref{eq:group-action1} \eqref{eq:group-action2} can be achieved, i.e., 
  \begin{align}
      \label{eq:equivariance_all}
    x^{(l)}[D^{(l-1)}_{\eta, \beta, v}x^{(l-1)}] = D^{(l)}_{\eta, \beta, v}x^{(l)}[x^{(l-1)}], \quad \forall l\ge 1.
  \end{align}  
Indeed, for the first layer, i.e., $l=1$, we write the LHS of \eqref{eq:equivariance_all} as
\begin{align}
  \nonumber
  &x^{(1)}[D^{(0)}_{\eta, \beta, v}x^{(0)}](u,\theta, \alpha, \lambda) = \sigma\left(\sum_{\lambda'}\int_{\R^2}2^{-2\alpha}D^{(0)}_{\eta, \beta, v}x^{(0)}(u+u', \lambda')W^{(1)}_{\lambda', \lambda}(2^{-\alpha}R_{-\theta} u')du'+b^{(1)}(\lambda)\right)\\ \label{eq:pf1_chain1}
  & = \sigma\left(\sum_{\lambda'}\int_{\R^2}2^{-2\alpha}x^{(0)}\left(R_{-\eta}2^{-\beta}(u+u'-v), \lambda'\right)W^{(1)}_{\lambda', \lambda}(2^{-\alpha}R_{-\theta} u')du'+b^{(1)}(\lambda)\right),
\end{align}
The RHS of \eqref{eq:equivariance_all} when $l=1$ is
\begin{align}
  \nonumber
  &D_{\eta, \beta, v}^{(1)}x^{(1)}[x^{(0)}](u,\theta, \alpha, \lambda) = x^{(1)}[x^{(0)}]\left(R_{-\eta}2^{-\beta}(u-v), \theta - \eta, \alpha-\beta, \lambda \right)\\ \nonumber
  & = \sigma\left(\sum_{\lambda'}\int_{\R^2}2^{-2(\alpha-\beta)}x^{(0)}\left(R_{-\eta} 2^{-\beta}(u-v)+\tilde{u},\lambda'\right)W^{(1)}_{\lambda',\lambda}(2^{-(\alpha-\beta)}R_{\eta-\theta}\tilde{u})d\tilde{u}+b^{(1)}(\lambda)\right)\\\label{eq:pf1_cov}
  & = \sigma\left(\sum_{\lambda'}\int_{\R^2}2^{-2\alpha}2^{2\beta}x^{(0)}\left(R_{-\eta} 2^{-\beta}(u-v)+2^{-\beta}R_{-\eta}u',\lambda'\right)W^{(1)}_{\lambda',\lambda}(2^{-\alpha}R_{-\theta}u')2^{-2\beta}du'+b^{(1)}(\lambda)\right)\\ \label{eq:pf1_chain2}
  & = \sigma\left(\sum_{\lambda'}\int_{\R^2}2^{-2\alpha}x^{(0)}\left(R_{-\eta}2^{-\beta}(u+u'-v), \lambda'\right)W^{(1)}_{\lambda', \lambda}(2^{-\alpha}R_{-\theta} u')du'+b^{(1)}(\lambda)\right),
\end{align}
where \eqref{eq:pf1_cov} uses the change of variable $u' = 2^{-\beta}R_{-\eta}\tilde{u}$. Thus \eqref{eq:pf1_chain1} and \eqref{eq:pf1_chain2} implies that \eqref{eq:equivariance_all} holds valid for $l=1$.


For $l>1$, with the definition of the $l$-th layer operation \eqref{eq:lth-layer}, the LHS of \eqref{eq:equivariance_all} becomes
\begin{align}
  \nonumber
  &x^{(l)}[D^{(l-1)}_{\eta, \beta, v}x^{(l-1)}](u, \theta, \alpha, \lambda)\\
  = & \sigma\left(\sum_{\lambda'}\int\int\int 2^{-2\alpha}D^{(l-1)}_{\eta, \beta, v}x^{(l-1)}\left(u+u', \theta+\theta', \alpha+\alpha', \lambda'\right)W^{(l)}_{\lambda',\lambda}\left(2^{-\alpha}R_{-\theta}u', \theta', \alpha'\right)d\alpha' d\theta' du'+b^{(l)}(\lambda)\right)\\\nonumber
 = & \sigma\bigg(\sum_{\lambda'}\int\int\int 2^{-2\alpha}x^{(l-1)}\left(R_{-\eta}2^{-\beta}(u+u'-v), \theta+\theta'-\eta, \alpha+\alpha'-\beta, \lambda'\right)\\ \label{eq:pfl_chain1} 
 & \quad\quad\quad\quad\quad\quad\quad\quad\quad\quad\quad\quad\quad\quad\quad\quad\quad\quad\quad\quad\quad\cdot W^{(l)}_{\lambda',\lambda}\left(2^{-\alpha}R_{-\theta}u', \theta', \alpha'\right)d\alpha' d\theta' du' + b^{(l)}(\lambda)\bigg).
\end{align}
On the other hand, the RHS of \eqref{eq:equivariance_all} is
\begin{align}
  \nonumber
  &D^{(l)}_{\eta, \beta,v}x^{(l)}[x^{(l-1)}](u, \theta, \alpha, \lambda) \\
  = & x^{(l)}[x^{(l-1)}]\left(R_{-\eta} 2^{-\beta}(u-v), \theta-\eta, \alpha-\beta, \lambda\right) \\\nonumber
  = & \sigma\Bigg(\sum_{\lambda'}\int\int\int 2^{-2(\alpha-\beta)}x^{(l-1)}\left(R_{-\eta} 2^{-\beta}(u-v)+\tilde{u}, \theta-\eta+\theta', \alpha-\beta+\alpha', \lambda'\right)\\
  & \quad\quad\quad\quad\quad\quad\quad\quad\quad\quad \cdot W^{(l)}_{\lambda'\lambda}\left(2^{-(\alpha-\beta)}R_{\eta-\theta}\tilde{u}, \theta', \alpha'\right)d\alpha' d\theta' d\tilde{u} + b^{(l)}(\lambda)\Bigg)\\ \nonumber
  = & \sigma\Bigg(\sum_{\lambda'}\int\int\int 2^{-2\alpha}2^{2\beta}x^{(l-1)}\left(R_{-\eta} 2^{-\beta}(u-v)+R_{-\eta}2^{-\beta}u', \theta-\eta+\theta', \alpha-\beta+\alpha', \lambda'\right)\\\label{eq:pfl_chain2}
  & \quad\quad\quad\quad\quad\quad\quad\quad\quad\quad \cdot W^{(l)}_{\lambda',\lambda}\left(2^{-\alpha}R_{-\theta}u', \theta', \alpha'\right)2^{-2\beta} d\alpha' d\theta' d\tilde{u} + b^{(l)}(\lambda)\Bigg),
\end{align}
where the last equation again uses the same change of variable. Equation \eqref{eq:pfl_chain1} combined with \eqref{eq:pfl_chain2} implies that \eqref{eq:equivariance_all} holds true for all $l>1$.
\end{proof}

\subsection{Necessary Part of the Proof of Proposition~\ref{thm:equivariance}}
\begin{proof}
We first note that a general feedforward neural network propagating through the feature spaces $\mathcal{X}^{(l)}$ has the following form: when $l=1$,
\begin{align}
    \label{eq:general-ff-1}
    x^{(1)}[x^{(0)}](u, \theta, \alpha, \lambda) = \sigma\bigg(\sum_{\lambda'}\int_{\R^2}x^{(0)}(u+u')W^{(1)}(u', \lambda', u, \theta, \alpha,\lambda)du'+b^{(1)}(\lambda)\bigg),
\end{align}
and, for $l>1$,
\begin{align}
\nonumber
    & x^{(l)}[x^{(l-1)}](u, \theta, \alpha, \lambda) = \sigma\bigg(\sum_{\lambda'}\int_{\R^2}\int_{S^1}\int_{\R}x^{(l-1)}(u+u', \theta+\theta', \alpha+\alpha', \lambda')\\
    \label{eq:general-ff-l}
    & \quad\quad\quad\quad\quad\quad\quad\quad\quad\quad\quad\quad\quad\quad\quad\quad \cdot W^{(l)}(u', \theta', \alpha', \lambda', u, \theta, \alpha,\lambda)d\alpha' d\theta' du'+b^{(l)}(\lambda)\bigg).
\end{align}
To prove the necessary part of Proposition~\ref{thm:equivariance}, we want to verify that $\rst$-equivariance \eqref{eq:equivariance_all} implies that the weight matrices $W^{(l)}$ in \eqref{eq:general-ff-1} and \eqref{eq:general-ff-l} take the special convolutional form in \eqref{eq:1st-layer} and \eqref{eq:lth-layer}.

Indeed, when $l=1$, the RHS of \eqref{eq:equivariance_all} under the operation~\eqref{eq:general-ff-1} is
\begin{align}
  &D^{(1)}_{\eta, \beta, v}x^{(1)}[x^{(0)}](u,\theta, \alpha, \lambda)\\
  = & x^{(1)}[x^{(0)}]\left(R_{-\eta}2^{-\beta}(u-v), \theta-\eta, \alpha - \beta, \lambda\right)\\
  =& \sigma\left(\sum_{\lambda'}\int x^{(0)}(R_{-\eta}2^{-\beta}(u-v)+\tilde{u},\lambda') W^{(1)}(\tilde{u},\lambda',R_{-\eta}2^{-\beta}(u-v), \theta-\eta, \alpha - \beta, \lambda)d\tilde{u} + b^{(1)}(\lambda)\right)\\\nonumber
  =& \sigma\left(\sum_{\lambda'}\int 2^{-2\beta}x^{(0)}(R_{-\eta}2^{-\beta}(u+ u'-v),\lambda') \right.\\
  & \quad\quad\quad\quad\quad\quad\quad\quad\quad \cdot W^{(1)}\left(R_{-\eta}2^{-\beta} u',\lambda',R_{-\eta}2^{-\beta}(u-v), \theta-\eta, \alpha - \beta, \lambda\right)d u' + b^{(1)}(\lambda)\bigg),
\end{align}
with change of variable similar to the sufficient part. For the RHS, we have
\begin{align}
  & x^{(1)}[D^{(0)}_{\eta, \beta, v}x^{(0)]}](u, \theta, \alpha, \lambda) \\
  = &  \sigma\left(\sum_{\lambda'}\int D^{(1)}_{\eta, \beta, v} x^{(0)}(u+u', \lambda') W^{(1)}(u', \lambda', u, \theta, \alpha, \lambda)du'+b^{(1)}(\lambda)\right)\\
  =& \sigma\left(\sum_{\lambda'}\int x^{(0)}(R_{-\eta}2^{-\beta}(u+u'-v), \lambda') W^{(1)}(u', \lambda', u, \theta, \alpha, \lambda)du'+b^{(1)}(\lambda)\right)
\end{align}
Hence, for \eqref{eq:equivariance_all} to hold when $l=1$, we need
\begin{align}
  W^{(1)}(u', \lambda', u, \theta, \alpha, \lambda) = 2^{-2\beta}W^{(1)}(R_{-\eta}2^{-\beta}u',\lambda',R_{-\eta}2^{-\beta}(u-v), \theta-\eta, \alpha - \beta, \lambda),
  \label{eq:necessaryBasisl1}
\end{align}
For all $u, \theta, \alpha, \lambda, u', \lambda', v, \eta, \beta$. Keeping $u, \theta, \alpha, \lambda, u', \lambda', \eta, \beta$ fixed while varying $v$ in (\ref{eq:necessaryBasisl1}), we deduce that $W^{(1)}(u', \lambda', u, \theta, \alpha, \lambda)$ does not depend on the third variable $u$. Hence $W^{(1)}(u', \lambda', u, \theta, \alpha, \lambda)=W^{(1)}(u', \lambda', 0, \theta, \alpha, \lambda), \forall u\in\R^2$. Further define $W_{\lambda, \lambda'}^{(1)}(u')$ as
\begin{align}
  W_{\lambda, \lambda'}^{(1)}(u') = W^{(1)}(u', \lambda', 0, 0, 0, \lambda).
\end{align}
Therefore, for any fixed $u', \lambda', u, \theta, \alpha, \lambda$, setting $\beta=\alpha$, $\eta=\theta$ in \eqref{eq:necessaryBasisl1} yields
\begin{align}
  &W^{(1)}(u', \lambda', u, \theta, \alpha, \lambda) = W^{(1)}(R_{-\theta}2^{-\alpha}u',\lambda',R_{-\theta}2^{-\alpha}(u-v), \theta-\theta, \alpha - \alpha, \lambda)2^{-2\alpha} \\
  &= W^{(1)}(R_{-\theta}2^{-\alpha}u',\lambda', 0, 0, 0, \lambda)2^{-2\alpha}=W^{(1)}_{\lambda,
  \lambda'}(R_{-\theta}2^{-\alpha}u')2^{-2\alpha}
\end{align}
Hence \eqref{eq:general-ff-1} has the special form \eqref{eq:1st-layer}.

For the subsequent layers $l>1$, a similar argument yields
\begin{align}
  W^{(l)}(u', \theta', \alpha', \lambda', u, \theta, \alpha, \lambda) = W^{(l)}(R_{-\eta}2^{-\beta}u',\theta', \alpha',\lambda',R_{-\eta}2^{-\beta}(u-v), \theta-\eta, \alpha - \beta, \lambda)2^{-2\beta},
  \label{eq:necessaryBasisl}
\end{align}
for all $ u, \theta, \alpha, \lambda, u', \theta', \alpha', \lambda', v, \eta, \beta$. Similarly, we can keep $u, \theta, \alpha, \lambda, u', \theta', \alpha', \lambda', \eta, \beta$ fixed while varying $v$ in (\ref{eq:necessaryBasisl}), which implies that $W^{(l)}(u', \theta', \alpha', \lambda', u, \theta, \alpha, \lambda)$ does not depend on the fifth variable $u$. Again, let us define
\begin{align}
  W_{\lambda, \lambda'}^{(l)}(u', \theta',\alpha') = W^{(l)}(u', \theta', \alpha', \lambda', 0, 0, 0, \lambda).
\end{align}
For any given $u', \theta', \alpha', \lambda', u, \theta, \alpha, \lambda$, setting $\beta=\alpha$, $\eta=\theta$ leads us to
\begin{align}
  &W^{(l)}(u', \theta', \alpha', \lambda', u, \theta, \alpha, \lambda) = W^{(l)}(R_{-\theta}2^{-\alpha}u',\theta', \alpha',\lambda',R_{-\theta}2^{-\alpha}(u-v), 0, 0, \lambda)2^{-2\alpha} \nonumber\\
  =& W^{(l)}(R_{-\theta}2^{-\alpha}u',\theta', \alpha',\lambda', 0, 0, 0, \lambda)2^{-2\alpha} = W^{(l)}_{\lambda', \lambda}(2^{-\alpha}u', \theta'-\theta
  , \alpha'-\alpha),
\end{align}
which implies that \eqref{eq:general-ff-l} can be written in the form of \eqref{eq:lth-layer}. This concludes the proof of Proposition~\ref{thm:equivariance}.
\end{proof}

\section{Proof of Theorem~\ref{thm:regularity_final}}

The proof of Theorem~\ref{thm:regularity_final} follows the similar steps in \cite{zhu2019scale}. More specifically, we aim to establish the following two propositions that show the layer-wise non-expansiveness of the model (Proposition~\ref{prop:nonexpansive}) and quantify the perturbation of equivariance with the presence of layer-wise spatial deformation $D_\tau$ (Proposition~\ref{prop:important}).

\begin{prop}  
\label{prop:nonexpansive}
Under \textbf{(A1)} and \textbf{(A2)}, an $\rst$-CNN satisfies:
\begin{enumerate}[label=(\alph*)]
\item For any $l\ge 1$, the $l$-th layer mapping $x^{(l)}[\cdot]$ defined in \eqref{eq:lth-layer} is non-expansive, i.e.,
  \begin{align}
  \label{eq:non-expansive}
    \|x^{(l)}[x_1]-x^{(l)}[x_2]\|\le \|x_1- x_2\|, \quad \forall x_1, x_2.
  \end{align}
\item Let $x_0^{(l)}(u,\theta, \alpha, \lambda)$ be the $l$-th layer output given a zero input $x^{(0)}(u,\lambda) = 0$, then $x_0^{(l)}(u,\theta,\alpha, \lambda)$ depends only on $\lambda$, i.e., $x^{(l)}_0(u,\theta,\alpha,\lambda) = x_0^{(l)}(\lambda)$.
\item Let $x_c^{(l)}$ be the centered version of $x^{(l)}$ after subtracting $x_0^{(l)}$, i.e.,
  \begin{align}
  \label{eq:centered}
    x_c^{(0)}(u,\lambda) \coloneqq x^{(0)}(u,\lambda) - x_0^{(0)}(\lambda) = x^{(0)}(u,\lambda), ~ x_c^{(l)}(u,\theta,\alpha, \lambda)\coloneqq x^{(l)}(u,\theta,\alpha,\lambda) - x_0^{(l)}(\lambda), ~l \ge 1,
  \end{align}
then $\|x_c^{(l)}\|\le \|x_c^{(l-1)}\|, ~\forall l\ge 1$. As a result, $\|x_c^{(l)}\| \le \|x_c^{(0)}\| = \|x^{(0)}\|$.
\end{enumerate}
\end{prop}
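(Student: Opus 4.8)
The plan is to prove the three parts in the order (b), (a), (c), since (c) is a formal consequence of the first two and the genuine content lies in the layer-wise non-expansiveness (a). For (b), I would feed $x^{(0)}\equiv 0$ into \eqref{eq:1st-layer}: the spatial integral vanishes, leaving $x_0^{(1)}(u,\theta,\alpha,\lambda)=\sigma(b^{(1)}(\lambda))$, which depends only on $\lambda$. Inductively, if $x_0^{(l-1)}$ depends only on $\lambda'$, it factors out of the joint integral in \eqref{eq:lth-layer}, and the substitution $w=2^{-\alpha}R_{-\theta}u'$ collapses the inner integral $\iiint 2^{-2\alpha}W^{(l)}_{\lambda',\lambda}(2^{-\alpha}R_{-\theta}u',\theta',\alpha')\,d\alpha'd\theta'du'$ into the total filter mass $\iiint W^{(l)}_{\lambda',\lambda}(w,\theta',\alpha')\,d\alpha'd\theta'dw$, which is independent of $(u,\theta,\alpha)$; hence $x_0^{(l)}$ depends only on $\lambda$. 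Given (a) and (b), part (c) follows by induction: since the network is feedforward, $x_0^{(l)}=x^{(l)}[x_0^{(l-1)}]$, so the centered feature \eqref{eq:centered} is $x_c^{(l)}=x^{(l)}[x^{(l-1)}]-x^{(l)}[x_0^{(l-1)}]$, and \eqref{eq:non-expansive} gives $\|x_c^{(l)}\|\le\|x^{(l-1)}-x_0^{(l-1)}\|=\|x_c^{(l-1)}\|$; iterating down to $l=0$, where $x_c^{(0)}=x^{(0)}$, yields $\|x_c^{(l)}\|\le\|x^{(0)}\|$.

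For part (a), \textbf{(A1)} lets me cancel the bias and discard the nonlinearity: writing $y=x_1-x_2$, one has $\|x^{(l)}[x_1]-x^{(l)}[x_2]\|\le\|T^{(l)}y\|$, where $T^{(l)}$ is the linear joint-convolution of \eqref{eq:lth-layer}, so it suffices to bound its operator norm (for the mixed norm \eqref{eq:norms}) by $A_l$, whereupon \textbf{(A2)} closes the estimate. I would bound $\|T^{(l)}\|$ in three stages. (i) For a fixed scale $\alpha$ and channel pair, the $\R^2\times S^1$ part is an $SE(2)$-correlation; a Schur test together with the substitution $w=2^{-\alpha}R_{-\theta}u'$, which exactly cancels the $2^{-2\alpha}$ prefactor uniformly in $\alpha$, bounds its $L^2(SE(2))$-operator norm by the slice $L^1$-norm $\|W^{(l)}_{\lambda',\lambda}(\cdot,\cdot,\alpha')\|_{L^1(SE(2))}$. (ii) The scale integral is treated in the $\sup_\alpha$ ($L^\infty$) direction: expanding $W^{(l)}_{\lambda',\lambda}(\cdot,\cdot,\alpha')=\sum_n a^{(l)}_{\lambda',\lambda}(\cdot,\cdot,n)\xi_n(\alpha')$ and invoking Cauchy--Schwarz, $\int_{I_\alpha}|\xi_n|\le\sqrt{|I_\alpha|}\,\|\xi_n\|_{L^2}=\sqrt2$, produces the $\sum_n$ structure and the factor $2$ appearing in $A_l$. (iii) The channel mixing is assembled by viewing $T^{(l)}$ as a block matrix over $(\lambda',\lambda)$ with entrywise bounds from (i)--(ii); a Schur test against the normalized norm \eqref{eq:norms} yields $\|T^{(l)}\|^2\le\frac{M_{l-1}}{M_l}PQ$ with $P=\sup_\lambda\sum_{\lambda'}(\cdots)$ and $Q=\sup_{\lambda'}\sum_\lambda(\cdots)$, and the elementary bound $\sqrt{xy}\le\max\{x,y\}$ converts $\sqrt{\frac{M_{l-1}}{M_l}PQ}$ into the $\max\{\cdot,\cdot\}$ defining $A_l$.

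The key analytic input behind stage (i) is the lemma $\|W\|_{L^1}\le\pi\|a\|_{\text{FB}}$ for a spatially expanded filter $W=\sum_k a(k)\psi_{j,k}$. This rests on the Dirichlet boundary condition \eqref{eq:def-fb}, the scale-invariance of $\|\psi_{j,k}\|_{L^1}$, the Poincar\'e/Cauchy--Schwarz inequality on the unit disk, and the identity $\mu_k=\|\nabla\psi_k\|_{L^2}^2$ (obtained by pairing $-\Delta\psi_k=\mu_k\psi_k$ with $\psi_k$), which shows $\|a\|_{\text{FB}}=\|\nabla W\|_{L^2}$ for the FB-norm \eqref{eq:norm-fb}. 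This is precisely why the FB-norm, rather than a plain $L^2$ coefficient norm, is the natural quantity in \textbf{(A2)}: the very same gradient norm reappears to control the deformation sensitivity in Theorem~\ref{thm:regularity_final}.

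The main obstacle I anticipate is the operator-norm estimate of stage (iii), where the mixed $L^\infty$-in-scale / $L^2$-in-$SE(2)$ / weighted-$\ell^2$-in-channel geometry of \eqref{eq:norms} must be reconciled with the group-convolution structure. One must carry out the $w=2^{-\alpha}R_{-\theta}u'$ change of variables so that the $2^{-2\alpha}$ normalization cancels uniformly in $\alpha$ (making the outer $\sup_\alpha$ harmless), and then track the asymmetry between the two Schur directions -- the supremum over channels is taken inside the scale integral in the input direction but outside it in the output direction -- which is what produces the slightly asymmetric, factor-$2$-weighted form $A_l=\pi\max\{\cdots,\frac{2M_{l-1}}{M_l}\cdots\}$. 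By contrast, part (b) and the inductive assembly of (c) are routine once (a) is in hand.
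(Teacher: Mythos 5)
Your proposal follows essentially the same route as the paper's proof: part (b) by induction on the zero input with the filter mass factoring out, part (c) as a formal corollary of (a) and (b), and part (a) via the same Schur-test/Cauchy--Schwarz estimate in which the substitution $w=2^{-\alpha}R_{-\theta}u'$ cancels the $2^{-2\alpha}$ factor, the slice $L^1$-norms are controlled by $\pi\|a\|_{\text{FB}}$ (the paper cites this as Lemma~\ref{lemma:lemma1} rather than re-deriving it from the Poincar\'e inequality and $\mu_k=\|\nabla\psi_k\|_{L^2}^2$ as you sketch), and the two Schur directions combine into the $\max$ defining $A_l$. The only cosmetic difference is that the paper writes the Schur test out as an explicit pointwise Cauchy--Schwarz followed by integration over $S^1\times\R^2$, and, like you, it defers the $l>1$ scale-integral details to the analogous computation in the scale-equivariant setting.
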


\begin{prop}
\label{prop:important}
In an $\rst$-CNN satisfying \textbf{(A1)} to \textbf{(A3)}, the following statements hold true.
\begin{enumerate}[label=(\alph*)]
    \item Given any $l\ge 1$,
    \begin{align}
        \left\|x^{(l)}[D_\tau x^{(l-1)}] - D_\tau x^{(l)}[x^{(l-1)}] \right\|\le 8 |\nabla \tau|_{\infty}\left\|x_c^{(l-1)}\right\|,
    \end{align}
    where $x_c^{(l-1)}$ is defined in \eqref{eq:centered}
    \item Given any $l\ge 1$, we have
    \begin{align}
    \label{eq:isometry}
        \left\|D^{(l)}_{\eta, \beta, v}x^{(l)}\right\| = 2^\beta \left\|x^{(l)}\right\|,
    \end{align}
    and
    \begin{align}
    \label{eq:regularity_grad_tau_group_1level}
        \left\|x^{(l)}[D_{\eta,\beta,v}^{(l-1)}\circ D_\tau x^{(l-1)}] - D^{(l)}_{\eta,\beta,v}D_\tau x^{(l)}[x^{(l-1}] \right\|\le 2^{\beta+3}|\nabla \tau|_{\infty}\left\|x_c^{(l-1)}\right\|.
    \end{align}
    \item For any $l\ge 1$,
    \begin{align}
        \left\|x^{(l)}[D_{\eta,\beta,v}^{(0)}\circ D_\tau x^{(l-1)}] - D^{(l)}_{\eta,\beta,v}D_\tau x^{(l)}[x^{(0}] \right\|\le 2^{\beta+3} l | \nabla \tau|_{\infty}\left\|x_c^{(0)}\right\|.
    \end{align}
    \item For any $l\ge 1$,
    \begin{align}
        \left\|D^{(l)}_{\eta, \beta, v}D_\tau x^{(l)}-D^{(l)}_{\eta, \beta,v}x^{(l)} \right\|\le 2^{\beta+1-j_l}|\tau|_{\infty}\left\|x_c^{(l-1)}\right\|\le 2^{\beta+1-j_l}|\tau|_{\infty}\left\|x^{(0)}\right\|.
    \end{align}
\end{enumerate}
\end{prop}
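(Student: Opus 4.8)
The plan is to prove the four parts in dependency order: part~(a) is the fundamental single-layer estimate, parts~(b) and~(c) follow from it by combining exact equivariance (Proposition~\ref{thm:equivariance}), the isometry, and an induction, and part~(d) is a separate spatial-regularity bound of the same flavor as~(a). For part~(a), I would first strip off the nonlinearity and the bias: since $\sigma$ acts pointwise it commutes with the spatial reindexing $D_\tau$, and the bias $b^{(l)}(\lambda)$ is constant in $(u,\theta,\alpha)$ so $D_\tau$ fixes it; invoking the non-expansiveness~\textbf{(A1)} reduces the claim to bounding the commutator $\|C(D_\tau y)-D_\tau C(y)\|$, where $C$ is the linear group convolution of~\eqref{eq:lth-layer} and $y=x^{(l-1)}$. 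Because $C$ maps any function constant in $u$ to a $u$-constant function, which $D_\tau$ leaves invariant, the commutator annihilates the spatially-constant part of $y$, so only the centered feature survives and $\|x_c^{(l-1)}\|$ appears on the right. I would then expand the filter under the separable basis via~\eqref{eq:decompose_filter}, reducing the bound to a sum over $k,m,n$ of commutators between the spatial convolution with the rescaled Fourier--Bessel mode $\psi_{j_l,k}$ and $D_\tau$. The key lemma, in the spirit of~\cite{dcfnet, Cheng_2018_rotdcf, zhu2019scale}, controls each such commutator by $|\nabla\tau|_\infty$ times a factor $\sqrt{\mu_k}$, using the identity $\int_D|\nabla\psi_k|^2=\mu_k\int_D\psi_k^2$ (integration by parts with the Dirichlet boundary condition) that ties the eigenvalue to the $H^1$-seminorm of the basis function. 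Summing and applying Cauchy--Schwarz against the weighted FB-norm~\eqref{eq:norm-fb}, the bound $A_l\le1$ from~\textbf{(A2)} caps the coefficient sum and yields the constant $8$, while~\textbf{(A3)} keeps the substitution Jacobian $\det(I-\nabla\tau)$ bounded away from zero so the change-of-variables estimates are valid.

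Part~(b) follows quickly. The isometry~\eqref{eq:isometry} is the substitution $u'=R_{-\eta}2^{-\beta}(u-v)$ in the norm~\eqref{eq:norms}, whose Jacobian contributes $2^{2\beta}$ while the rotation preserves the measure and the supremum over $\alpha$ is shift-invariant, giving $\|D^{(l)}_{\eta,\beta,v}x^{(l)}\|=2^\beta\|x^{(l)}\|$. For~\eqref{eq:regularity_grad_tau_group_1level}, I would use exact equivariance to write $x^{(l)}[D^{(l-1)}_{\eta,\beta,v}\circ D_\tau x^{(l-1)}]=D^{(l)}_{\eta,\beta,v}x^{(l)}[D_\tau x^{(l-1)}]$, so the difference equals $D^{(l)}_{\eta,\beta,v}\big(x^{(l)}[D_\tau x^{(l-1)}]-D_\tau x^{(l)}[x^{(l-1)}]\big)$; the isometry pulls out $2^\beta$ and part~(a) closes it with constant $2^{\beta+3}$.

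Part~(c) is a telescoping induction on $l$ over the full network $x^{(l)}[x^{(0)}]=x^{(l)}\circ\cdots\circ x^{(1)}$. Applying global equivariance and the isometry reduces the claim to bounding the whole-network deformation commutator $\|x^{(l)}[D_\tau x^{(0)}]-D_\tau x^{(l)}[x^{(0)}]\|$. Splitting off the top layer and inserting the intermediate term $x^{(l)}[D_\tau\,x^{(l-1)}[x^{(0)}]]$, the non-expansiveness of the last layer (Proposition~\ref{prop:nonexpansive}(a)) carries the $(l-1)$-layer commutator forward unchanged, while part~(a) contributes one fresh term $8|\nabla\tau|_\infty\|x_c^{(l-1)}\|$. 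The monotonicity $\|x_c^{(l-1)}\|\le\|x^{(0)}\|$ from Proposition~\ref{prop:nonexpansive}(c) makes every term uniform, so the recursion accumulates exactly $l$ copies and produces the factor $l$, hence the bound $2^{\beta+3}l|\nabla\tau|_\infty\|x^{(0)}\|$.

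Finally, for part~(d) I would again reduce by the isometry to bounding $\|D_\tau x^{(l)}-x^{(l)}\|$ and use the representation
\begin{align}
 D_\tau x^{(l)}(u,\cdot)-x^{(l)}(u,\cdot)=-\int_0^1\nabla_u x^{(l)}(u-s\tau(u),\cdot)\cdot\tau(u)\,ds,
\end{align}
which gives $\|D_\tau x^{(l)}-x^{(l)}\|\le |\tau|_\infty\,\|\nabla_u x^{(l)}\|$ after an $L^2$ change of variables controlled by~\textbf{(A3)}. The spatial gradient is bounded by $\|\nabla_u x^{(l)}\|\le 2^{-j_l}(\text{const})\,\|x_c^{(l-1)}\|$ through the same mechanism as part~(a): differentiating the convolution transfers the derivative onto the rescaled basis $\psi_{j_l,k}$, producing a factor $2^{-j_l}$ from the rescaling and $\sqrt{\mu_k}$ from the eigenvalue identity, with the FB-norm control~\textbf{(A2)} capping the sum. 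Combined with the isometry this yields the stated coefficient $2^{\beta+1-j_l}|\tau|_\infty$, and Proposition~\ref{prop:nonexpansive}(c) supplies the final bound by $\|x^{(0)}\|$. The main obstacle throughout is the single-mode commutator estimate underlying part~(a) (and reused in~(d)): it is where the Fourier--Bessel geometry, the weighting $\mu_k$ in the FB-norm, and every explicit constant (the $8$, the threshold $1/5$, the $\pi$ in~\textbf{(A2)}) are actually pinned down.
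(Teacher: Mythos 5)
Your proposal is correct and follows essentially the same route as the paper: the single-layer commutator bound (a) via FB-norm control of the filters and their gradients under \textbf{(A2)}, the isometry plus exact equivariance for (b), telescoping with the non-expansiveness of Proposition~\ref{prop:nonexpansive} for (c), and a spatial-gradient/Taylor estimate for (d). The only cosmetic difference is that the paper routes the constants through the explicit quantities $B_l, C_l, D_l$ (the $L^1$ norms of the filter, of $|u|\,|\nabla_u W|$, and of $\nabla_u W$, each capped by Lemma~\ref{lemma:Al-bounds-everything}), whereas you attribute them directly to the mode-wise $\sqrt{\mu_k}$ estimate and Cauchy--Schwarz against the FB-norm, which is the same mechanism one level down.
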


\subsection{Proof of Proposition~\ref{prop:nonexpansive}}

Before proving Proposition~\ref{prop:nonexpansive}, we present the following two lemmas that are crucial to bound various norms of the convolutional filters using their Fourier-Bessle (FB) norm

\begin{lemma}
\label{lemma:lemma1}
  Let $\{\psi_k\}_k$ be the Fourier-Bessel basis on the unit disk $D\subset \R^2$, and let $\{\varphi_m\}_m$ be the Fourier basis on the unit circle $S^1$. Assume that
  \begin{align}
      F(u) = \sum_k a(k)\psi_{k}(u), \quad G(u,\theta) = \sum_k\sum_m b(k, m)\psi_{k}(u)\varphi_m(\theta)
  \end{align}
  are functions in $H^1_0(2^jD)$ and $H^1_0(2^jD)\times L^2(S^1)$, respectively. Then
  \begin{align}
      & \quad \quad \quad \quad \quad \int |F(u)|du,~~ \int |u||\nabla F(u)|du,~~ \int|\nabla F(u)|du \le \pi \|a\|_{\text{FB}} = \pi \left(\sum_k \mu_k a(k)^2\right)^{1/2},\\
      & \iint |G(u, \theta)|dud\theta,~~ \iint |u||\nabla_u G(u, \theta)|dud\theta,~~ \iint|\nabla_u G(u)|dud\theta \le \pi \|b\|_{\text{FB}} = \pi \left(\sum_{k,m} \mu_k b(k,m)^2\right)^{1/2}.
  \end{align}
\end{lemma}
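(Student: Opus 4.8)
The plan is to reduce all the estimates to two facts about the unit disk $D$: that the coefficient norm $\|a\|_{\mathrm{FB}}$ coincides with the $H^1_0$ Dirichlet seminorm of the expanded function, and that $D$ has finite measure $|D|=\pi$. First I would remove the rescaling. With the normalized basis $\psi_{j,k}(u)=2^{-2j}\psi_k(2^{-j}u)$ and the substitution $w=2^{-j}u$, the quantities $\int|F|\,du$ and $\int|u|\,|\nabla F|\,du$ are exactly scale-invariant, reducing to $\int_D|\Phi|\,dw$ and $\int_D|w|\,|\nabla\Phi|\,dw$ for the unit-disk function $\Phi(w)=\sum_k a(k)\psi_k(w)$, while $\int|\nabla F|\,du=2^{-j}\int_D|\nabla\Phi|\,dw$; hence it suffices to prove the three bounds on $D$ itself. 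The second preparatory step is the identity $\|a\|_{\mathrm{FB}}^2=\sum_k\mu_k a(k)^2=\int_D|\nabla\Phi(u)|^2\,du$. This follows from Green's identity together with the eigen-relation in \eqref{eq:def-fb}: since $\psi_k=0$ on $\partial D$, integration by parts gives $\int_D\nabla\psi_k\cdot\nabla\psi_{k'}=-\int_D\psi_k\Delta\psi_{k'}=\mu_k\delta_{kk'}$, using the $L^2(D)$-orthonormality of $\{\psi_k\}$. Thus $\|a\|_{\mathrm{FB}}$ is precisely the $H^1_0(D)$ seminorm $\|\nabla\Phi\|_{L^2(D)}$.

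With this identification, each of the three bounds for $F$ becomes a one-line consequence of Cauchy--Schwarz on the finite-measure domain (contributing $\sqrt{|D|}=\sqrt{\pi}$) combined with the Poincaré inequality $\|\Phi\|_{L^2(D)}\le\mu_1^{-1/2}\|\nabla\Phi\|_{L^2(D)}$, where $\mu_1$ is the smallest Dirichlet eigenvalue. Concretely, $\int_D|\Phi|\le\sqrt{\pi}\,\|\Phi\|_{L^2}\le\sqrt{\pi}\,\mu_1^{-1/2}\|a\|_{\mathrm{FB}}$; next $\int_D|\nabla\Phi|\le\sqrt{\pi}\,\|\nabla\Phi\|_{L^2}=\sqrt{\pi}\,\|a\|_{\mathrm{FB}}$; and, since $|u|\le1$ on $D$, $\int_D|u|\,|\nabla\Phi|\le\int_D|\nabla\Phi|\le\sqrt{\pi}\,\|a\|_{\mathrm{FB}}$. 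Because $\mu_1=j_{0,1}^2>1/\pi$, all three prefactors are at most $\pi$, which is the clean common constant stated in the lemma.

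For the mixed function $G(u,\theta)=\sum_{k,m}b(k,m)\psi_k(u)\varphi_m(\theta)$, the plan is to integrate out the angular variable first. Parseval in the orthonormal Fourier system $\{\varphi_m\}$ on $S^1$ (with the normalized measure, so $S^1$ has total mass $1$ and $D\times S^1$ has measure $\pi$) turns $\iint|G|^2$ and $\iint|\nabla_u G|^2$ into $\sum_{k,m}b(k,m)^2$ and $\sum_{k,m}\mu_k b(k,m)^2=\|b\|_{\mathrm{FB}}^2$ respectively, so each estimate reduces to the $F$-type argument applied mode-by-mode in $m$ and summed, with $|u|\le1$ again handling the middle term. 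The only genuinely delicate point is the identification $\|a\|_{\mathrm{FB}}^2=\int_D|\nabla\Phi|^2$: it presupposes enough regularity for the series to converge in $H^1$ so that the term-by-term integration by parts and the interchange of sum and integral are legitimate. I would dispatch this using the hypothesis $F\in H^1_0(2^jD)$ (equivalently $\sum_k\mu_k a(k)^2<\infty$), which guarantees $H^1$-convergence and justifies all the manipulations; everything else is Cauchy--Schwarz, Poincaré, and the trivial bound $|u|\le1$.
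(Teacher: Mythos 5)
Your proof is correct, and it is worth noting that the paper itself contains no proof of this lemma: it simply defers to Proposition~A.1 of the cited RotDCF reference. Your argument is the natural self-contained reconstruction of that result, and all the key steps check out: the identity $\sum_k\mu_k a(k)^2=\|\nabla\Phi\|_{L^2(D)}^2$ via Green's identity and $L^2$-orthonormality of the Dirichlet eigenfunctions (the one implicit convention you correctly adopt is that the $\psi_k$ are $L^2(D)$-normalized), Cauchy--Schwarz on the finite-measure domain contributing $\sqrt{|D|}=\sqrt{\pi}$, Poincar\'e with the sharp constant $\mu_1^{-1/2}$ for the zeroth-order term, and the trivial bound $|u|\le 1$ for the weighted gradient term; since $\mu_1=j_{0,1}^2\approx 5.78$, the resulting prefactors $\sqrt{\pi/\mu_1}$ and $\sqrt{\pi}$ are indeed all below $\pi$, so you actually prove a slightly sharper constant than stated. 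Your scaling reduction also correctly matches how Lemma~\ref{lemma:Al-bounds-everything} consumes the result (the $L^1$ norm and the $|u|$-weighted gradient integral are scale-invariant under the $2^{-2j}$-normalized dilation, while $\int|\nabla F|$ picks up the factor $2^{-j}$ that reappears as $2^{j_l}D^{(l)}_{\lambda',\lambda}\le\pi\|a\|_{\text{FB}}$). The only phrase I would tighten is ``applied mode-by-mode in $m$ and summed'': summing the $F$-type $L^1$ bounds over $m$ would give $\sum_m\|b(\cdot,m)\|_{\text{FB}}$ rather than $\|b\|_{\text{FB}}$; the correct route, which your preceding sentence already states, is to apply Cauchy--Schwarz once on the product space $D\times S^1$ (of measure $\pi$ under the normalized angular measure) and only then evaluate $\iint|\nabla_u G|^2$ by Parseval in $m$ followed by the eigenvalue identity in $k$.
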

The proof of Lemma~\ref{lemma:lemma1} can be found in Proposition~A.1 of \cite{Cheng_2018_rotdcf}. A direct application of Lemma~\ref{lemma:lemma1} leads to the following lemma.

\begin{lemma}
  \label{lemma:Al-bounds-everything}
  Let $a_{\lambda',\lambda}^{(l)}(k,m,n)$ be the coefficients of the filter $W_{\lambda',\lambda}^{(l)}(u,\theta,\alpha)$ (supported on $2^{j_l}D\times S^1\times [-1, 1]$) under the separable bases $\left\{\psi_{j_l,k}(u)\right\}_k$, $\left\{\varphi_m(\theta)\right\}_m$ and $\left\{\xi_n(\alpha)\right\}_n$ defined in the main paper, and define $W_{\lambda',\lambda,n}^{(l)}(u, \theta)$ as
  \begin{align}
    \label{eq:W_lambda'_lambda_m}
    W^{(l)}_{\lambda',\lambda,n}(u, \theta)\coloneqq \sum_k\sum_m a_{\lambda',\lambda}^{(l)}(k,m,n)\psi_{j_l,k}(u)\varphi_m(\theta).
  \end{align}
  We have
  \begin{align}
    B^{(1)}_{\lambda',\lambda}, C^{(1)}_{\lambda',\lambda}, 2^{j_1}D^{(1)}_{\lambda',\lambda} \le \pi\|a^{(1)}_{\lambda',\lambda}\|_{\text{FB}},  ~~ B^{(l)}_{\lambda',\lambda,n}, C^{(l)}_{\lambda',\lambda,n}, 2^{j_l}D^{(l)}_{\lambda',\lambda,n} \le \pi\|a^{(l)}_{\lambda',\lambda}(\cdot, n)\|_{\text{FB}}, ~\forall l > 1,
  \end{align}
  where
\begin{align}
  \label{eq:BCD_lambda}
  \left\{
  \begin{aligned}
    &B_{\lambda',\lambda}^{(1)}\coloneqq \int\left|W_{\lambda',\lambda}^{(1)}(u)\right|du, \quad &&B_{\lambda',\lambda,n}^{(l)}\coloneqq \int_{S^1}\int_{\R^2}\left|W_{\lambda',\lambda,n}^{(l)}(u, \theta)\right|dud\theta, \quad l>1,\\
    &C_{\lambda',\lambda}^{(1)}\coloneqq \int\left|u\right| \left|\nabla_uW_{\lambda',\lambda}^{(1)}(u)\right|du, \quad &&C_{\lambda',\lambda,n}^{(l)}\coloneqq \int_{S^1}\int_{\R^2}\left|u\right| \left|\nabla_uW_{\lambda',\lambda,n}^{(l)}(u, \theta)\right|dud\theta, \quad l>1,\\
    &D_{\lambda',\lambda}^{(1)}\coloneqq \int \left|\nabla_uW_{\lambda',\lambda}^{(1)}(u)\right|du, \quad &&D_{\lambda',\lambda,n}^{(l)}\coloneqq \int_{S^1}\int_{\R^2} \left|\nabla_uW_{\lambda',\lambda,n}^{(l)}(u, \theta)\right|dud\theta, \quad l>1.\\
  \end{aligned}\right.
\end{align}
Hence we have
\begin{align}
\label{eq:desired_bounds}
  B_l, C_l, 2^{j_l}D_l \le A_l,
\end{align}
where
\begin{align}
  \label{eq:BCD1}
  \begin{aligned}
    &B_1 \coloneqq \max \left\{\sup_{\lambda}\sum_{\lambda'=1}^{M_0}B_{\lambda',\lambda}^{(1)}, ~\frac{M_0}{M_1}\sup_{\lambda'}\sum_{\lambda=1}^{M_1}B_{\lambda',\lambda}^{(1)}\right\},\\
    &C_1 \coloneqq \max \left\{\sup_{\lambda}\sum_{\lambda'=1}^{M_0}C_{\lambda',\lambda}^{(1)}, ~\frac{M_0}{M_1}\sup_{\lambda'}\sum_{\lambda=1}^{M_1}C_{\lambda',\lambda}^{(1)}\right\},\\
    &D_1 \coloneqq \max \left\{\sup_{\lambda}\sum_{\lambda'=1}^{M_0}D_{\lambda',\lambda}^{(1)}, ~\frac{M_0}{M_1}\sup_{\lambda'}\sum_{\lambda=1}^{M_1}D_{\lambda',\lambda}^{(1)}\right\},
  \end{aligned}
\end{align}
and, for $l>1$,
\begin{align}
  \label{eq:BCDl}
  \begin{aligned}
    &B_l \coloneqq \max \left\{\sup_{\lambda}\sum_{\lambda'=1}^{M_{l-1}}\sum_{n} B_{\lambda',\lambda,n}^{(l)}, ~\frac{2M_{l-1}}{M_l}\sum_{n} B_{l,n}\right\}, \quad B_{l,n}\coloneqq\sup_{\lambda'}\sum_{\lambda=1}^{M_l}B_{\lambda',\lambda,n}^{(l)},\\
    &C_l \coloneqq \max \left\{\sup_{\lambda}\sum_{\lambda'=1}^{M_{l-1}}\sum_{n} C_{\lambda',\lambda,n}^{(l)}, ~\frac{2M_{l-1}}{M_l}\sum_{n} C_{l,n}\right\}, \quad C_{l,n}\coloneqq\sup_{\lambda'}\sum_{\lambda=1}^{M_l}C_{\lambda',\lambda,n}^{(l)},\\
    &D_l \coloneqq \max \left\{\sup_{\lambda}\sum_{\lambda'=1}^{M_{l-1}}\sum_{n} D_{\lambda',\lambda,n}^{(l)}, ~\frac{2M_{l-1}}{M_l}\sum_{n} D_{l,n}\right\}, \quad D_{l,n}\coloneqq\sup_{\lambda'}\sum_{\lambda=1}^{M_l}D_{\lambda',\lambda,n}^{(l)}.
  \end{aligned}
\end{align}
The bound on $A_l, \forall l\ge 1$, i.e., \textbf{(A2)}, therefore implies that 
\begin{align}
 B_l, C_l, 2^{j_l}D_l \le 1, \quad\forall l\ge 1.
\end{align}
\end{lemma}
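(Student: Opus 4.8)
The plan is to obtain the lemma as a rescaled version of Lemma~\ref{lemma:lemma1}, whose bounds are stated for functions expanded in the \emph{unit-disk} Fourier--Bessel basis $\{\psi_k\}_k$, and then to assemble the per-filter estimates into the aggregate quantities $B_l, C_l, D_l$ of \eqref{eq:BCD1}--\eqref{eq:BCDl} by summing over channels and taking suprema.

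First I would record the effect of the dyadic rescaling $\psi_{j,k}(u) = 2^{-2j}\psi_k(2^{-j}u)$. Writing a generic spatially-expanded function as $F(u) = \sum_k a(k)\psi_{j,k}(u) = 2^{-2j}\tilde F(2^{-j}u)$ with $\tilde F(w) = \sum_k a(k)\psi_k(w)$, the change of variable $w = 2^{-j}u$ (so $du = 2^{2j}dw$ and $|u| = 2^j|w|$) gives $\int|F(u)|\,du = \int|\tilde F(w)|\,dw$ and $\int|u|\,|\nabla_u F(u)|\,du = \int|w|\,|\nabla_w\tilde F(w)|\,dw$, both scale-invariant, whereas $\int|\nabla_u F(u)|\,du = 2^{-j}\int|\nabla_w\tilde F(w)|\,dw$ picks up a factor $2^{-j}$. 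Applying the unit-disk bounds of Lemma~\ref{lemma:lemma1} to $\tilde F$ then yields $B^{(1)}_{\lambda',\lambda}, C^{(1)}_{\lambda',\lambda}\le \pi\|a^{(1)}_{\lambda',\lambda}\|_{\text{FB}}$ directly and $D^{(1)}_{\lambda',\lambda}\le 2^{-j_1}\pi\|a^{(1)}_{\lambda',\lambda}\|_{\text{FB}}$, i.e. $2^{j_1}D^{(1)}_{\lambda',\lambda}\le\pi\|a^{(1)}_{\lambda',\lambda}\|_{\text{FB}}$; this is precisely why the factor $2^{j_l}$ is attached to $D$ throughout. For $l>1$ the identical spatial change of variable applies to $W^{(l)}_{\lambda',\lambda,n}(u,\theta) = 2^{-2j_l}\tilde G(2^{-j_l}u,\theta)$ for each fixed $n$ (the Fourier factor $\varphi_m(\theta)$ is untouched by $\nabla_u$ and survives the $S^1$-integral), so the ``$G$-part'' of Lemma~\ref{lemma:lemma1} gives $B^{(l)}_{\lambda',\lambda,n}, C^{(l)}_{\lambda',\lambda,n}, 2^{j_l}D^{(l)}_{\lambda',\lambda,n}\le\pi\|a^{(l)}_{\lambda',\lambda}(\cdot,n)\|_{\text{FB}}$, which are the per-filter estimates claimed in the first display of the lemma.

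With these in hand, the aggregate inequality $B_l, C_l, 2^{j_l}D_l\le A_l$ follows by monotonicity: every quantity in \eqref{eq:BCD1}--\eqref{eq:BCDl} is a $\max$ of finite sums and suprema of the non-negative numbers $B^{(l)}_{\lambda',\lambda,n}$ (resp. $C, D$), so replacing each $B^{(l)}_{\lambda',\lambda,n}$ by its upper bound $\pi\|a^{(l)}_{\lambda',\lambda}(\cdot,n)\|_{\text{FB}}$ preserves the inequality and reproduces term-by-term the two expressions inside the $\max$ defining $A_l$; for instance $\sup_\lambda\sum_{\lambda'}\sum_n B^{(l)}_{\lambda',\lambda,n}\le \pi\sup_\lambda\sum_{\lambda'}\sum_n\|a^{(l)}_{\lambda',\lambda}(\cdot,n)\|_{\text{FB}}$ and $\sum_n B_{l,n}\le\pi\sum_n\sup_{\lambda'}\sum_\lambda\|a^{(l)}_{\lambda',\lambda}(\cdot,n)\|_{\text{FB}}$. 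The same argument handles the $l=1$ layer and the $C$, $D$ cases verbatim, giving $B_l, C_l, 2^{j_l}D_l\le A_l$ for all $l\ge 1$; invoking \textbf{(A2)}, namely $A_l\le 1$, then yields the final conclusion $B_l, C_l, 2^{j_l}D_l\le 1$.

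Since the per-filter estimates are a direct transcription of Lemma~\ref{lemma:lemma1}, the only real work is bookkeeping, and I expect the main (and only genuinely error-prone) step to be tracking the powers of $2$ in the rescaling --- in particular verifying that the $L^1$ norm $B$ and the weighted gradient norm $C$ are scale-invariant while the plain gradient norm $D$ scales as $2^{-j_l}$. This mismatch is exactly what forces the $2^{j_l}$ normalization in both the statement and the definition of $A_l$, so confirming it is the conceptual crux; the combinatorial assembly in the second step is routine once the definitions in \eqref{eq:BCD1} and \eqref{eq:BCDl} are unwound.
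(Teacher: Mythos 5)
Your proposal is correct and follows exactly the route the paper takes: the paper's own proof is a one-line appeal to Lemma~\ref{lemma:lemma1} "after rescaling the spatial variable $u$," and your computation of how $B$, $C$ scale invariantly while $D$ picks up the factor $2^{-j_l}$ is precisely the content being elided there. The channel/index aggregation into $A_l$ and the invocation of \textbf{(A2)} also match the paper's argument.
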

\begin{proof}[Proof of Lemma~\ref{lemma:Al-bounds-everything}]
 Applying Lemma~\ref{lemma:lemma1} to the filters $W_{\lambda',\lambda}^{(1)}(u)$ and $W^{(l)}_{\lambda',\lambda,n}(u,\theta)$ defined in \eqref{eq:W_lambda'_lambda_m} after rescaling the spatial variable $u$ easily leads to the desired bounds \eqref{eq:desired_bounds}. The rest of the lemma follows from the assumption \textbf{(A2)}.
\end{proof}

\begin{proof}[Proof of Proposition~\ref{prop:nonexpansive}]
  To avoid cumbersome notations, we drop the layer index $(l)$ in the filters $W^{(l)}_{\lambda',\lambda}$ and $b^{(l)}$, and let $M = M_l, M'= M_{l-1}$ when the context is clear. The proof of (a) is similar to that of Proposition~2(a) in \cite{zhu2019scale} after a further integration on $S^1$. More specifically, when $l=1$, the definition of $B_1$ in \eqref{eq:BCD_lambda} implies that
  \begin{align}
      \sup_{\lambda}\sum_{\lambda'}B_{\lambda',\lambda}^{(1)}\le B_1, \quad \sup_{\lambda'}\sum_{\lambda}B_{\lambda',\lambda}^{(1)}\le B_1\frac{M}{M'}
  \end{align}
Therefore, given two inputs $x_1$ and $x_2$, we have
\begin{align}
  &\left|\left(x^{(1)}[x_1]-x^{(1)}[x_2]\right)(u,\theta, \alpha,\lambda)\right|^2\\\nonumber
  = & \left|\sigma\left( \sum_{\lambda'}\int x_1(u+u', \lambda') W_{\lambda', \lambda}\left(2^{-\alpha}R_{-\theta}u'\right)2^{-2\alpha}du'+b(\lambda)\right)\right.\\
  & \quad\quad\quad\quad\quad\quad\quad\quad\quad\left. - \sigma\left( \sum_{\lambda'}\int x_2(u+u', \lambda') W_{\lambda', \lambda}\left(2^{-\alpha}R_{-\theta}u'\right)2^{-2\alpha}du'+b(\lambda)\right)\right|^2\\
  \le &\left| \sum_{\lambda'}\int x_1(u+u', \lambda') W_{\lambda', \lambda}\left(2^{-\alpha}R_{-\theta}u'\right)2^{-2\alpha}du'- \sum_{\lambda'}\int x_2(u+u', \lambda') W_{\lambda', \lambda}\left(2^{-\alpha}R_{-\theta}u'\right)2^{-2\alpha}du' \right|^2\\
  = & \left| \sum_{\lambda'}\int (x_1 - x_2)(u+u', \lambda') W_{\lambda', \lambda}\left(2^{-\alpha}R_{-\theta}u'\right)2^{-2\alpha}du' \right|^2\\
  \le & \left( \sum_{\lambda'}\int \left| (x_1-x_2)(u+u',\lambda')\right|^2\left|W_{\lambda',\lambda}(2^{-\alpha}R_{-\theta}u')\right|2^{-2\alpha}du' \right) \sum_{\lambda'}\int \left|W_{\lambda',\lambda}(2^{-\alpha}R_{-\theta}u')\right|2^{-2\alpha}du'\\
  = & \left( \sum_{\lambda'}\int \left| (x_1-x_2)(u+u',\lambda')\right|^2\left|W_{\lambda',\lambda}(2^{-\alpha}R_{-\theta}u')\right|2^{-2\alpha}du' \right) \left( \sum_{\lambda'}B_{\lambda',\lambda}^{(1)} \right)\\
  \le & B_1\sum_{\lambda'}\int \left| (x_1-x_2)(\tilde{u},\lambda')\right|^2\left|W_{\lambda',\lambda}(2^{-\alpha}R_{-\theta}\left(\tilde{u}-u)\right)\right|2^{-2\alpha}d\tilde{u}
\end{align}
Hence, given any $\alpha\in\R$, we have
\begin{align}
  \nonumber
  & \sum_{\lambda}\int_{S^1}\int_{\R^2} \left|\left(x^{(1)}[x_1]-x^{(1)}[x_2]\right)(u,\alpha, \theta,\lambda)\right|^2 dud\theta \\
  \le & \sum_{\lambda}\int_{S^1}\int_{\R^2} B_1\sum_{\lambda'}\int \left| (x_1-x_2)(\tilde{u},\lambda')\right|^2\left|W_{\lambda',\lambda}(2^{-\alpha}R_{-\theta}\left(\tilde{u}-u)\right)\right|2^{-2\alpha}d\tilde{u} dud\theta\\ \label{eq:int_on_s1_1}
  = & B_1\sum_{\lambda'}
  \int_{\R^2}\left| (x_1-x_2)(\tilde{u},\lambda')\right|^2  \left( \sum_{\lambda}\int_{S^1}\int_{\R^2} \left|W_{\lambda',\lambda}(2^{-\alpha}R_{-\theta}\left(\tilde{u}-u)\right)\right|2^{-2\alpha} dud\theta\right)d\tilde{u}\\
  = & B_1\sum_{\lambda'}\int \left| (x_1-x_2)(\tilde{u},\lambda')\right|^2 \left( \sum_{\lambda}B_{\lambda',\lambda}^{(1)}\right)d\tilde{u}\\
  \le & B_1^2\frac{M}{M'} \sum_{\lambda'}\int \left| (x_1-x_2)(\tilde{u},\lambda')\right|^2 d\tilde{u}\\
  = & B_1^2 M \|x_1 - x_2\|^2\\ 
  \le & M \|x_1 - x_2\|^2,
\end{align}
where the last inequality comes from Lemma~\ref{lemma:Al-bounds-everything}, and \eqref{eq:int_on_s1_1} makes use of the fact that
\begin{align}
\label{eq:change_of_variable_s1}
    \int_{\R^2}\left|W(2^{-\alpha}R_{-\theta} u) \right|2^{-2\alpha}du = \int_{\R^2}|W(u)|du, \quad \forall \alpha\in \R, ~\forall \theta\in S^1, 
\end{align}
and $\int_{S^1}d\theta = 1$ due to the normalization factor $1/2\pi$ in the definition. Therefore, we have
\begin{align}
  \|x^{(1)}[x_1] - x^{(1)}[x_2] \|^2 & = \sup_{\alpha}\frac{1}{M}\sum_{\lambda}\iint \left|\left(x^{(1)}[x_1]-x^{(1)}[x_2]\right)(u,\theta, \alpha,\lambda)\right|^2 dud\theta\le \|x_1-x_2\|^2.
\end{align}
This concludes the proof of (a) for the case $l=1$.  For the case $l>1$, the same technique applies by considering the joint convolution $\int_{S^1}\int_{\R^2}(\cdots)dud\theta$ while making use of \eqref{eq:change_of_variable_s1}, and we omit the detail.

For part (b), we use mathematical induction. More specifically, $x_0^{(0)}(u,\lambda)= 0$ by definition. For $l=1$, $x_0^{(1)}(u, \theta, \alpha,\lambda) = \sigma(b^{(1)}(\lambda))$. Assuming that $x_0^{(l-1)}(u, \theta, \alpha,\lambda) = x_0^{(l-1)}(\lambda)$ for some $l>1$, we have
\begin{align}
  \nonumber
  & x_0^{(l)}(u,\alpha, \lambda) \\
  =& \sigma\left( \sum_{\lambda'}\int_{S^1}\int_{\R^2}\int_{\R} x_0^{(l-1)}(u+u', \theta+\theta', \alpha+\alpha', \lambda') W_{\lambda', \lambda}^{(l)}\left(2^{-\alpha}R_{-\theta}u',\theta', \alpha'\right)2^{-2\alpha}d\alpha' du'd\theta'+b^{(l)}(\lambda)\right)\\
  =& \sigma\left( \sum_{\lambda'}x_0^{(l-1)}(\lambda') \int_{S^1}\int_{\R^2}\int_{\R} W_{\lambda', \lambda}^{(l)}\left(2^{-\alpha}R_{-\theta}u',\theta', \alpha'\right)2^{-2\alpha}d\alpha' du'd\theta'+b^{(l)}(\lambda)\right)\\
  =& \sigma\left( \sum_{\lambda'}x_0^{(l-1)}(\lambda') \int_{S^1}\int_{\R^2}\int_{\R} W_{\lambda', \lambda}^{(l)}\left(u', \theta', \alpha'\right)d\alpha' du'd\theta'+b^{(l)}(\lambda)\right)\\
  =& x_0^{(l)}(\lambda).
\end{align}
  
To prove part (c): for any $l>1$, we have
\begin{align}
  \|x_c^{(l)}\| = \|x^{(l)}-x_0^{(l)}\| = \|x^{(l)}[x^{(l-1)}]-x_0^{(l)}[x_0^{(l-1)}]\| \le \|x^{(l-1)}-x_0^{(l-1)}\| = \|x_c^{(l-1)}\|,
\end{align}
where the inequality comes from the layer-wise non-expansiveness \eqref{eq:non-expansive} in part (a). An easy induction leads to (c).
\end{proof}  
  
\subsection{Proof of Proposition~\ref{prop:important}}  
\begin{proof} Just like Proposition~\ref{prop:nonexpansive}(a), the proofs of part (a) and part (d), respectively, of Proposition~\ref{prop:important} are similar to those of Proposition~3(a) and Proposition~4 of \cite{zhu2019scale}. More specifically,  making use of \eqref{eq:change_of_variable_s1} and $\int_{S^1}d\theta = 1$ when $l=1$, and further taking the integral $\int_{S^1}\int_{\R^2}(\cdots)dud\theta$ over $S^1\times \R^2$ instead of just $\R^2$ when $l>1$, we arrive at the following
\begin{align}
    \left\|x^{(l)}[D_\tau x^{(l-1)}] - D_\tau x^{(l)}[x^{(l-1)}] \right\|\le 4(B_l+C_l) |\nabla \tau|_{\infty}\left\|x_c^{(l-1)}\right\|\le 4(B_l+C_l) |\nabla \tau|_{\infty}\left\|x^{(0)}\right\|, \quad \forall l\ge 1,\\
    \left\|D^{(l)}_{\eta, \beta, v}D_\tau x^{(l)}-D^{(l)}_{\eta, \beta,v}x^{(l)} \right\|\le 2^{\beta+1}|\tau|_{\infty}D_l\left\|x_c^{(l-1)}\right\|\le 2^{\beta+1}|\tau|_{\infty}D_l\left\|x^{(0)}\right\|, \quad \forall l\ge 1,
\end{align}
where $B_l, C_l, D_l$ are defined in \eqref{eq:BCD1} and \eqref{eq:BCDl}. Lemma~\ref{lemma:Al-bounds-everything} thus implies that
 \begin{align}
    \left\|x^{(l)}[D_\tau x^{(l-1)}] - D_\tau x^{(l)}[x^{(l-1)}] \right\|\le 8 |\nabla \tau|_{\infty}\left\|x_c^{(l-1)}\right\|\le 8 |\nabla \tau|_{\infty}\left\|x^{(0)}\right\|, \quad \forall l\ge 1,\\
    \left\|D^{(l)}_{\eta, \beta, v}D_\tau x^{(l)}-D^{(l)}_{\eta, \beta,v}x^{(l)} \right\|\le 2^{\beta+1-j_l}|\tau|_{\infty}\left\|x_c^{(l-1)}\right\|\le 2^{\beta+1-j_l}|\tau|_{\infty}\left\|x^{(0)}\right\|, \quad \forall l\ge 1.
\end{align}

For part (b), given any $(\eta, \beta, v)\in S^1\times \R\times \R^2$, we have
\begin{align}
  \|D^{(l)}_{\eta, \beta,v}x^{(l)}\|^2 &= \sup_\alpha \frac{1}{M_l}\sum_{\lambda}\int_{S^1}\int_{\R^2}\left|D^{(l)}_{\eta, \beta,v}x^{(l)}(u, \theta,\alpha,\lambda) \right|^2 dud\theta\\
  & = \sup_\alpha \frac{1}{M_l}\sum_{\lambda}\int_{S^1}\int_{\R^2}\left|x^{(l)}(2^{-\beta}R_{-\theta}(u-v), \theta-\eta,\alpha-\beta,\lambda) \right|^2 dud\theta\\
  & = \sup_\alpha \frac{1}{M_l}\sum_{\lambda}\int_{S^1}\int_{\R^2}\left|x^{(l)}(u', \theta-\eta,\alpha-\beta,\lambda) \right|^22^{2\beta} du'd\theta\\
  & = 2^{2\beta}\|x^{(l)}\|^2
\end{align}
Thus \eqref{eq:isometry} is valid. The second half of part (b) holds since
\begin{align}
  \nonumber
  & \left\|x^{(l)}[D^{(l)}_{\eta, \beta,v}\circ D_\tau x^{(l-1)}] - D^{(l)}_{\eta, \beta,v}D_\tau x^{(l)}[x^{(l-1)}]\right\|\\
  = &\left\|D^{(l)}_{\eta, \beta,v} x^{(l)}[D_\tau x^{(l-1)}] - D^{(l)}_{\eta, \beta,v}D_\tau x^{(l)}[x^{(l-1)}]\right\|\\
  = & 2^{\beta}\left\|x^{(l)}[D_\tau x^{(l-1)}] - D_\tau x^{(l)}[x^{(l-1)}]\right\|\\
  \le & 2^{\beta+3}|\nabla \tau|_\infty \|x_c^{(l-1)}\|,
\end{align}
where the first equality holds because of the $\rst$-equivariance, i.e., Theorem~\ref{thm:equivariance}, and the second equality follows from \eqref{eq:isometry}.

The proof of part (c) is exactly the same as that of Proposition~3(c) of \cite{zhu2019scale}. Specifically, we telescope the inequality \eqref{eq:regularity_grad_tau_group_1level} while making use of the non-expansiveness of the layer-wise features, i.e., Proposition~\ref{prop:nonexpansive}(c). The detail is omitted.
\end{proof}

\subsection{Proof of Theorem~\ref{thm:regularity_final}}
\begin{proof}
  Theorem~\ref{thm:regularity_final} is a direct consequence of Proposition~\ref{prop:important}. More specifically,
  \begin{align}
    \nonumber
    &\left\| x^{(L)}[D^{(0)}_{\eta, \beta, v}\circ D_\tau x^{(0)}]-D^{(L)}_{\eta, \beta, v}x^{(L)}[x^{(0)}] \right\|\\
    \le & \left\|x^{(L)}[D^{(0)}_{\eta, \beta,v}\circ D_\tau x^{(0)}] - D^{(L)}_{\eta, \beta,v}D_\tau x^{(L)}[x^{(0)}]\right\| + \left\|D^{(L)}_{\eta, \beta,v}D_\tau x^{(L)}[x^{(0)}] - D^{(L)}_{\eta, \beta, v}x^{(L)}[x^{(0)}]  \right\|\\
    \le & 2^{\beta+3}L |\nabla\tau|_{\infty}\|x^{(0)}\| + 2^{\beta+1-j_L}|\tau|_{\infty} \|x^{(0)}\|\\
    =&2^{\beta+1}\left(4L|\nabla \tau|_\infty + 2^{-j_L}|\tau|_\infty  \right)\|x^{(0)}\|,
  \end{align}
where the second inequality comes from Proposition~\ref{prop:important}(c) and Proposition~\ref{prop:important}(d).
This concludes the proof of Theorem~\ref{thm:regularity_final}.
\end{proof}

\bibliography{egbib}
\bibliographystyle{abbrv}
\end{document}